\pgfplotsset{compat=1.16}
\pgfplotsset{
every x tick label/.append style = {font=\Large},
every y tick label/.append style = {font=\Large},
}
\def\Figref#1{Fig.~\ref{#1}}
\def\Secref#1{Sec.~\ref{#1}}
\def\eqref#1{eq.~\ref{#1}}
\def\Eqref#1{Eq.~\ref{#1}}
\def\Algref#1{Alg.~\ref{#1}}
\def\Tableref#1{Table~\ref{#1}}
\def\1{\bm{1}}
\def\rvc{{\mathbf{c}}}
\def\rvt{{\mathbf{t}}}
\def\rvx{{\mathbf{x}}}
\def\rvy{{\mathbf{y}}}
\def\rvz{{\mathbf{z}}}
\def\rmI{{\mathbf{I}}}
\DeclareMathAlphabet{\mathsfit}{\encodingdefault}{\sfdefault}{m}{sl}
\SetMathAlphabet{\mathsfit}{bold}{\encodingdefault}{\sfdefault}{bx}{n}
\def\gN{{\mathcal{N}}}
\def\gP{{\mathcal{P}}}
\def\gX{{\mathcal{X}}}
\def\sP{{\mathbb{P}}}
\newcommand{\E}{\mathbb{E}}
\newcommand{\R}{\mathbb{R}}
\DeclareMathOperator*{\argmax}{arg\,max}
\theoremstyle{plain}
\newtheorem{theorem}{\protect\theoremname}
\theoremstyle{remark}
\theoremstyle{definition}
\theoremstyle{plain}
\newtheorem{corollary}{\protect\corollaryname}
\providecommand{\corollaryname}{Corollary}
\providecommand{\lemmaname}{Lemma}
\providecommand{\remarkname}{Remark}
\providecommand{\theoremname}{Theorem}
\newcommand{\etal}{\emph{et~al.}}
\begin{document}

\title{Smooth-Reduce: Leveraging Patches for Improved Certified Robustness} 

\author[1]{Ameya Joshi \thanks{Work partially done during an internship at Bosch Center for AI (BCAI), Pittsburgh, PA. The project page can be found at \url{https://nyu-dice-lab.github.io/SmoothReduce/}.}}
\author[1]{Minh Pham}
\author[1]{Minsu Cho}
\author[2]{Leonid Boytsov}
\author[2]{Filipe Condessa}
\author[2]{J. Zico Kolter}
\author[1]{Chinmay Hegde}
\affil[1]{New York University}
\affil[2]{Bosch Center for AI}
\affil[ ]{\texttt{\{ameya.joshi, mp5847, mc8065, chinmay.h\}@nyu.edu}}
\affil[ ]{\texttt{\{leonid.boystov, filipe.condessa\}@us.bosch.com}}
\affil[ ]{\texttt{jkolter@cs.cmu.edu}}

\maketitle

\begin{abstract}
       Randomized smoothing (RS) has been shown to be a fast, scalable technique for certifying the robustness of deep neural network classifiers. However, methods based on RS require augmenting data with large amounts of noise, which leads to significant drops in accuracy. We propose a training-free, modified smoothing approach, Smooth-Reduce, that leverages patching and aggregation to provide improved classifier certificates. Our algorithm classifies overlapping patches extracted from an input image, and aggregates the predicted logits to certify a larger radius around the input. We study two aggregation schemes --- max and mean --- and show that both approaches provide better certificates in terms of certified accuracy, average certified radii and abstention rates as compared to concurrent approaches. We also provide theoretical guarantees for such certificates, and empirically show significant improvements over other randomized smoothing methods that require expensive retraining. Further, we extend our approach to videos and provide meaningful certificates for video classifiers. 
\keywords{Adversarial defenses, Certifiable defenses, Randomized Smoothing, Ensemble Models, Robust Video Classifiers}
\end{abstract}

\section{Introduction}
\label{sec:intro}

\noindent\textbf{Motivation.} Deep networks have been shown to be notoriously prone to `` attacks'' if an adversary were allowed to modify their input~\cite{Goodfellow2018existence,szegedy2013intriguing,Athalye2018obfuscated}. 
While several heuristic ``defenses'' for such attacks have been proposed~\cite{madry2018towards,trades,mart}, 
only a handful of them are \emph{provably accurate}~\cite{wong2018provable,cohen2019certified,salman2019provably}, i.e., they provide guarantees for robust performance. The general approach in such defenses is to \emph{certify} that a deep classifier, for any input data point in a volume (parameterized by a radius) around a given input $\rvx$, does not change its predictions. 

In this line of work, Wong and  Kolter~\cite{wong2018provable} pioneered the use of bound propagation to derive upper bounds on the certification radius for networks with ReLU activations; however, this approach fails to scale to larger networks, and the bounds become vacuous quite quickly. Subsequently,  Cohen~\etal~\cite{cohen2019certified} and Salman~\etal~\cite{salman2019provably} employed \emph{randomized smoothing} (RS) to establish bounds on the (local) Lipschitz constant of a {smoothed} deep classifier. Such approaches have since been fruitfully developed to provide non-vacuous certificates. 

Randomized smoothing (RS) methods typically involve \emph{convolving} the deep classifier under consideration, $f$, with any smooth, continuous probability distribution, $\gP$ and deriving a radius of certification $R$ for all points around $\rvx$ measured in some norm. For simplicity, consider certifying models in terms of $\ell_2$-bounded input perturbations. Then, a randomized smoothing scheme produces a (bound on) a certificate parameter $R$ such that $$\sP\left(\gP \star f(\rvx) \neq \gP \star f(\rvx + \delta)\right) \approx 0 ~~ \text{for any} ~~ \| \delta \|_2 \leq R.$$ 

In practice, this type of functional convolution is achieved by randomly sampling noise vectors $\rvz_i \sim \gP$, adding them to copies of the input $\rvx$, and performing inference over each copy. The resultant smooth classifier estimates the empirical probability mass, $p_A$ for the correct class, $A$. Yang~\etal~\cite{Yang2020RandomizedSO} derive the radius of certification  using $$R = \int_{1-p_A}^{1/2} \frac{1}{\Phi(p_A)}dp_A,$$ where $p_A$ is the probability of the correct class under the noisy inference, and $\Phi(\cdot)$ is the appropriate CDF. Here, the geometry of the $\ell_p$ ball influences the choice of the noise distribution. For example, Gaussian noise provides $\ell_2$ certificates. 

\begin{figure*}[!t]
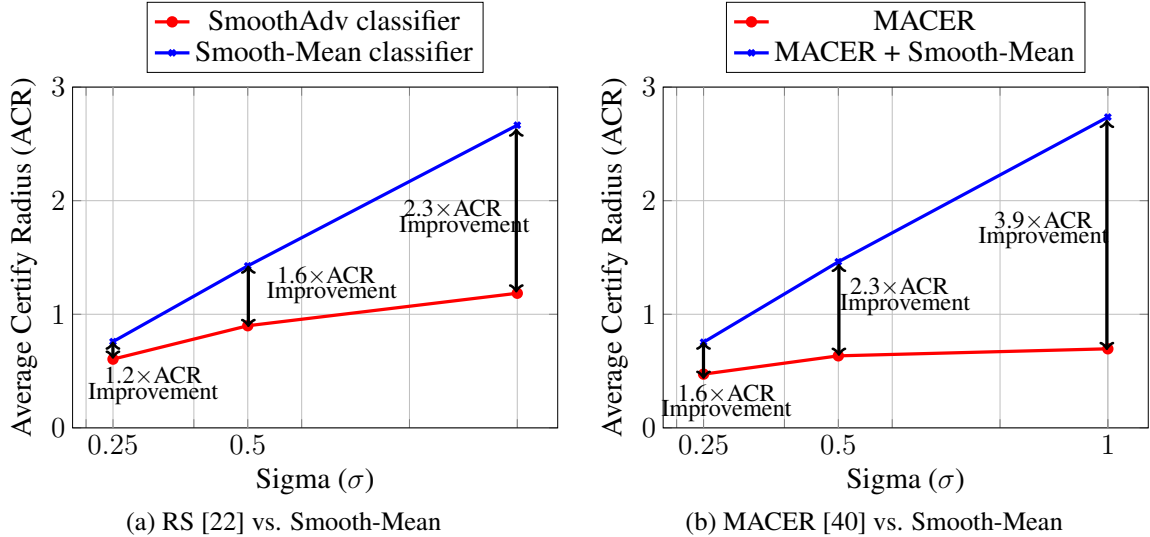

    \centering
    \begin{tabular}{c c}
         \multicolumn{1}{l}{\includegraphics[width=0.45\linewidth]{Plots/improve_mean.tex}} & \multicolumn{1}{r}{\includegraphics[width=0.45\linewidth]{Plots/improve_mean_macer.tex}} \\
         (a) RS~\cite{salman2019provably} vs. Smooth-Mean & (b) MACER~\cite{Zhai2020MACERAA} vs. Smooth-Mean
    \end{tabular}
    \caption{\textbf{Smooth-Reduce improves upon Randomized Smoothing(RS)}. Smooth-Reduce leverages patching to emulate ensembles to reduce variance of smooth predictions by the base classifier. We study two flavors of Smooth-Reduce that use \emph{max} and \emph{mean} aggregation schemes respectively. Smooth-Reduce takes any base classifier that is trained to be robust to noise and uses a RS-inspired certification algorithm to generate larger certificates with lower variants. Our approach shows significant improvements over concurrent smoothing methods in certified accuracy and abstention rates across several datasets and classifiers.}
    \label{fig:my_label}
\end{figure*}

The RS approach allows us to get non-trivial certificates for high-dimensional inputs, providing the first known family of theoretically provable defenses of deep neural network classifiers to adversarial attacks. Nevertheless, there still remain several real-world shortcomings. First, in order for the empirical probability mass to be accurately estimated, a large number of samples are necessary.
Second, the certified accuracy achieved via randomized smoothing (RS) is substantially lower than empirical accuracy achieved via heuristics such as adversarial training.
Third, the addition of noise to the inputs often significantly degrades the performance of the network. The last problem is particularly challenging, since it requires some care to handle. Typical RS methods (such as Salman~\etal\  \cite{salman2019provably} and Cohen~\etal\  \cite{cohen2019certified}) propose noise-augmented training strategy to sidestep this problem. However, in practice we see that noise-augmented training comes with a price: the certified accuracies drop off significantly as the radius increases. Several other approaches~\cite{Zhai2020MACERAA,Alfarra2020DataDR,Horvath2021BoostingRS} propose improvements to prevent such a dramatic drop-off, but they involve careful model re-training with noise augmentation, often involving several heuristic parameters. 

How then can we get better certificates? As a starting point, observe that any RS scheme involves two basic components: the base classifier $f$ and the noise distribution $\gP$. Adding noise provides certified robustness, but decreases accuracy; resolving this tradeoff is the key. Works such as~\cite{Alfarra2020DataDR,sukenik2021intriguing,Yang2020RandomizedSO} focus on $\gP$, and propose convolution with more sophisticated (sometimes even data-dependent) noise distributions. On the flip side, works such as \cite{salman2019provably,Zhai2020MACERAA,addepalli2021boosting,Horvath2021BoostingRS,Wang2021ImprovingAR,jeong2021smoothmix} focus on training better base classifiers $f$. We pursue the latter approach in this paper. 

At the heart of our approach is a simple technique that is ubiquitous in machine learning inference: \emph{ensembling}. Aggregating results from an ensemble of \emph{diverse} classifiers acting on a given data point has long been used to improve classifier performance in standard (non-adversarial) inference settings. However, in practice, training large (and diverse) ensembles for deep networks can be non-trivial (and sometimes even prohibitively expensive). The difficulty compounds when noise augmentation and adversarial training are considered. 

We overcome this difficulty by \emph{emulating} an ensemble classifier by extracting a set of (large) patches from a given image, running a (single) base classifier on all these patches, and aggregating the results. This technique has also been successfully employed by a recent series of patch-level models~\cite{Dosovitskiy2021AnII,Trockman2022PatchesAA}. Specifically, we posit that small affine transformations of an image induce sufficient diversity leading to more robust performance. Further, we also study two popular aggregation schemes for ensembling --- \emph{max} and \emph{mean} aggregation --- and demonstrate that both significantly outperform all existing RS approaches.

\noindent\textbf{Contributions.} In this paper, we propose an adaptive, ensembling-based \emph{training-free} smoothed classifier that significantly outperforms existing RS methods. 

\begin{figure*}[!t]
  \centering
  \includegraphics[width=0.75\linewidth]{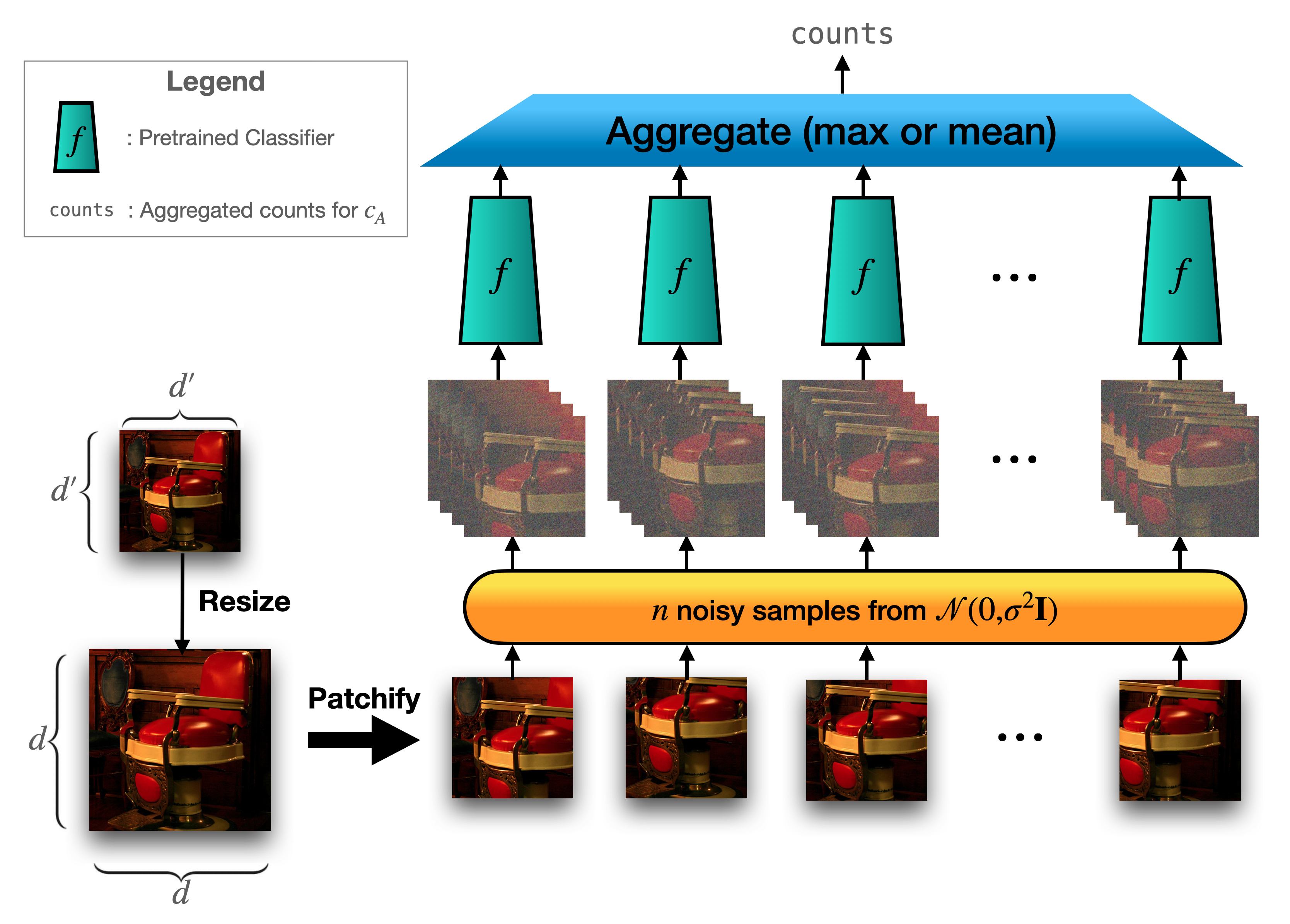}
    \caption{\textbf{Smooth-Reduce Certification.}\small \sl Smooth-Reduce modifies the RS certification in two ways. First, an input set is created to simulate an ensemble. In this case, we use patches sampled from the resized image. Following the \textsc{CERTIFY} subroutine from \cite{cohen2019certified}, noise is added to every element in the set. Next, the counts of predicted classes are aggregated to estimate $\underline{p_A}$, the probability of the most probable class, $c_A$. The final step uses \Eqref{eq:rs_smoothreduce} with $\underline{p_A}$ to derive a certificate that holds with high probability.}
    \label{fig:block}
\end{figure*}

Our specific contributions are as follows:
\begin{enumerate}
\item We present a modified smooth classifier that leverages an input set constructed by extracting patches of the input image, and achieves a higher certified radius using aggregation. 
\item We show that our certificates hold with high probability with intuitive extensions of the theoretical analysis by Cohen~\etal~\cite{cohen2019certified} and Salman \etal~\cite{salman2019provably}.
\item We demonstrate significant improvements in certification performance for CIFAR-10 and ImageNet compared with several state-of-the-art randomized smoothing approaches.
\item Finally, we extend our approach to provide certificates for video classifiers on UCF-101, therefore demonstrating that our approach scales to high dimensional domains.
\end{enumerate}

\noindent\textbf{Techniques.} Our approach consists of four steps: (1) We emulate a diverse set of inputs from a given (single) image or video input. For our image experiments, we simply sample overlapping contiguous patches. For our video experiments, we sample overlapping subvideos from the original video stream. (2) We then follow the standard randomized smoothing approach by creating $n$ copies of the input set, and adding independent noise vectors to each element in all copies  (3) We then estimate the predicted probability of each class for each copy of the input set, and take the average of the maximum estimated predicted probability for each copy. (4) Finally, we record certificates for the input using the expression in Corollary~\ref{thm:Smooth-Max} below. We explain each of these steps in detail in Section~\ref{sec:cors}.

\section{Related Work}
\label{sec:rel_work}

\noindent\textbf{Certified Defenses.} Ever since deep classifiers have been found to be vulnerable to adversarial attacks~\cite{szegedy2013intriguing,Athalye2018obfuscated,Carlini2017cwl2}, considerable efforts have been directed towards developing reliable defenses~\cite{madry2018towards,trades,wang2019improving,Samangouei2018DefenseGANPC,yin2020defense}. 
The above approaches lack strong theoretical guarantees. 
Provable defenses (that provide certificates of correctness) fall into two major categories. The first category involves establishing upper bounds on the perturbation radii for the inputs of each layer (using linear, quadratic, convex, or even mixed-integer programming) and propagating these bounds to achieve a certificate for an entire network. These include works such as ~\cite{wong2018provable,Raghunathan2018CertifiedDA,Raghunathan2018semidefinite,tjeng2017evaluating,katz2017reluplex,katz2017towards,carlini2017groundtruth,huang2017safety,Weng2018TowardsFC}. However, such approaches are computationally very expensive and do not scale at all to large, modern deep network classifiers.

\noindent\textbf{Randomized Smoothing.} The second category of provable defense involves some variation of randomized smoothing (RS), which advocate ``smoothing'' the outputs of non-linear, non-Lipschitz networks by their functional convolution with specially-chosen noise distributions. Early works such as 
\cite{lecuyer2019certified,cohen2019certified,salman2019provably} provide $\ell_2$ robustness certificates by adding Gaussian noise. Subseqent works \cite{teng2019ell_1,pmlr-v108-levine21a} have presented certificates for the $\ell_1$ and Wasserstein metrics respectively using Laplacian smoothing. Yang~\etal\cite{Yang2020RandomizedSO} provide a general approach to selecting distributions for various classes of adversarial attacks; unfortunately, certificates other than the $\ell_2$-norm have $\Omega(d^{-1/2})$ dependence, leading to trivial certificates for high dimensional inputs. 

On the practical side, the above RS methods still fall short of heuristic empirical defense methods when evaluated in terms of robust accuracy. Therefore, several works have propose modifications to the certification scheme to improve performance. MACER~\cite{Zhai2020MACERAA} maximizes surrogates of the certified radius to train better certifiable models, while \cite{Alfarra2020DataDR} finetune the variance of noise for each input data point. \cite{jeong2021smoothmix} uses an adversarial version of MixUp~\cite{Zhang2018mixupBE} to train models with better tradeoffs on accuracy and certifiable robustness. Notice, however, that all these approaches involve re-training large-scale models with different objectives and data augmentation schemes. 

\noindent\textbf{Ensembled Defenses.} Ensembling is one of the primary motivations for our Smooth-Reduce method. 
We discuss some recent relevant work in that context. Horvath~\etal\ ~\cite{Horvath2021BoostingRS} propose ensembling over diverse classifiers and show that this decreases variance of predictions, allowing better certificates. \mbox{Yang~\etal\ \cite{Yang2021OnTC}} prove that diversified gradients and large confidence margins are necessary and sufficient conditions for robust ensembles. Liu~\etal\ ~\cite{Liu2020EnhancingCR} propose a weighted ensemble of networks as the base classifier and demonstrate they provide better certificates. While all these approaches rely on model-level ensembling, we emulate ensembles by using patching and basic linear operations. This allows us to ensure diversity as well as improved base classifier performance. We also demonstrate that our approach outperforms ensemble smoothing by a large margin. 

\section{The Smooth-Reduce Framework}
\label{sec:cors}

\noindent\textbf{Preliminaries: } Let $\rvx \in \R^d$ be a given input. For ease of exposition, we suppose that $\rvx$ is an image (and extend the framework to video inputs later below). Let $f:\R^d\to[0, 1]^c$ be any classifier that takes the input and assigns each class label $c$ with probability $f_c$. Cohen~\etal~\cite{cohen2019certified} propose performing inference using the ``smooth'' classifier:
\begin{equation}
    \hat{f} =  \argmax_c \E_{\rvz \sim \gN(0, \sigma^2\rmI)} \left[ f_c(\rvx + \rvz)\right],
    \label{eq:cohen_smooth}
\end{equation}
which enjoys the benefits of guarantees of correctness. To calculate these guarantees, the standard certification approach estimates the (most probable) class $c_A \in [C]$ and the second most probable class $c_B \in [C]$, as predicted by $\hat{f}$. It also estimates upper and lower bounds (respectively), $\underline{p_A}, \overline{p_B}$, on the corresponding class probabilities. To do so, we create $n_0$ copies of the input, add $n_0$ i.i.d. Gaussian noise vectors sampled from $\gN(0, \sigma^2)$ and estimate $p_A$. 
The certified radius is then derived using the relation:
\begin{equation}
    R = \frac{\sigma}{2}\left( \Phi^{-1}(\underline{p_A}) - \Phi^{-1}(\overline{p_B})\right),
    \label{eq:smooth_radius}
\end{equation}
where $\Phi^{-1}(\cdot)$ is the inverse Gaussian CDF (see \cite{cohen2019certified,salman2019provably} for a rigorous derivation). Notice that the above procedure makes no assumptions on the base classifier $f$, and can be used to achieve a certified radius for any model (including deep neural network classifiers). 

According to \Eqref{eq:smooth_radius}, in order to obtain a higher radius of certification, we can increase either the variance of the noise or the estimated probability of the true class. However, adding large amounts of noise to the input leads to degradation in the performance of $\hat{f}$ (compared to $f$), and could give poor classification performance. Indeed, the majority of works focus on training deep classifiers $f$ that are robust to noisy inputs. Instead of pursuing this path, we focus on obtaining an improved estimate of $\underline{p_A}$.

\subsection{Smooth-Reduce}

A general approach to obtaining high-quality predictions is via \emph{ensembling}. Using an ensemble of classifiers tends to decrease the variance of the predicted probabilities while improving accuracy~\cite[P. 256]{Goodfellow-et-al-2016}. However, deep networks are very expensive to train, and training a large (and diverse) set of deep classifiers for a given training dataset can be prohibitive. This challenge is exacerbated in RS approaches which tend to require re-training models with noise augmentation.

Instead, we draw inspiration from the folklore practice of using \emph{cropping} during inference to improve performance, as well as recent empirical observations regarding the considerable effectiveness of patch-based classification~\cite{Dosovitskiy2021AnII,Trockman2022PatchesAA}. We propose patching as a mechanism to create a diverse set of (sub)images from a single input image; this allows us to emulate an ensemble while using a single base classifier. 


Our method works as follows. We create a set of inputs, $\gX = \{ \rvx_1, \rvx_2 \dots \rvx_k \}$ from a given input (base) image, $\rvx$, by using sampling (uniformly at random) sub-images with $d'$ total pixels (with $d' < d$) and upsampling each sub-image to the original resolution (with $d$ pixels). All base images and patches are assumed to be square for simplicity. We then define a modified version of the smooth classifier from \Eqref{eq:cohen_smooth} that we call the ``Smooth-Reduce'' classifier:
\begin{eqnarray}
    \bar{f}(\rvx) &= \argmax_c \E_{\rvz \sim \gN(0, \sigma^2\rmI)}~          \textsc{Aggregate}_{i=1}^k(f_c(\rvx_i + \rvz) ) \, ,
    \label{eq:max_rs}
\end{eqnarray}
where $\textsc{Aggregate}_{i=1}^k$ is a routine that \emph{reduces} (combines) the predicted logits for inputs enumerated over the set $\gX$. For our approach, we consider two specific aggregation functions, \emph{max} and \emph{mean} over the predicted logits. 

The Smooth-Reduce classifier, $\bar{f}$, is a simple modification of the standard RS approach. However, we find that it improves over $\hat{f}$ in two important aspects. Firstly, since it emulates an ensemble of classifiers, the variance of the estimated probability $\underline{p_A}$ is reduced, leading to sharper bounds on $\underline{p_A}$. (For a more in-depth discussion, see also the Appendix and \cite{Horvath2021BoostingRS}) Further, we find that it also increases the estimated probability values $p_A$ themselves; both aggregation options in Smooth-Reduce lead to more confident classification probabilities than the base classifier $f$. For this to hold, we have to ensure that the patches are large enough (so that meaningful classification is achieved), and that we extract sufficiently many patches from the input image (so that we get boosts via aggregation). 


\subsection{Theoretical Analysis}

To derive certificates of performance for our proposed Smooth-Reduce classifier $\bar{f}$, we need to rethink \Eqref{eq:smooth_radius} when used with the new class probabilities. We first restate:

\begin{theorem}[taken from \cite{cohen2019certified}]
    Let $c_A, c_B \in [C]$ be the most likely and second-most likely classes, and $\underline{p_A}, \overline{p_B} \in [0,1]$ be the probability estimates associated with $c_A$ and $c_B$. If 
    \begin{equation*}
        \sP_{\rvz} \left(f(\rvx + \rvz) = c_A\right) \geq \underline{p}_A \geq \bar{p_B} \geq \max_{c \neq c_A} \sP_{\rvz}\left( f(\rvx + \rvz)=c\right),  
    \end{equation*}
    then $\hat{f}(\rvx+\delta) = c_A$ for all vectors $\delta$ satisfying $\|\delta\|_2 \leq R$, where \[R = \frac{\sigma_{\rvz}}{2}\left(\Phi^{-1}(\underline{p}_A) - \Phi^{-1}(\bar{p}_B)\right).\]
\end{theorem}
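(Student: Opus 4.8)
The plan is to follow the Neyman--Pearson argument of Cohen~\etal, which reduces the geometric statement about the certified ball to a one-dimensional comparison between two shifted Gaussians. Fix any perturbation $\delta$ with $\|\delta\|_2 \leq R$. It suffices to show that $\sP_\rvz(f(\rvx+\delta+\rvz) = c_A) > \sP_\rvz(f(\rvx+\delta+\rvz) = c)$ for every competing class $c \neq c_A$, since this forces $\hat{f}(\rvx+\delta) = \argmax_c \sP_\rvz(f(\rvx+\delta+\rvz)=c) = c_A$. Equivalently, writing $X \sim \gN(\rvx, \sigma^2\rmI)$ and $Y \sim \gN(\rvx+\delta, \sigma^2\rmI)$, and setting $A = \{\rvu : f(\rvu) = c_A\}$ and $B_c = \{\rvu : f(\rvu) = c\}$, I want a tight lower bound on $\sP(Y \in A)$ and a tight upper bound on $\sP(Y \in B_c)$ given only the constraints $\sP(X \in A) \geq \underline{p_A}$ and $\sP(X \in B_c) \leq \overline{p_B}$.

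First I would invoke the Neyman--Pearson lemma to identify the worst-case sets. Because $X$ and $Y$ are Gaussians with identical covariance $\sigma^2\rmI$, their likelihood ratio is a monotone function of the linear functional $\delta^\top \rvu$. Hence, among all measurable sets of $X$-measure at least $\underline{p_A}$, the one minimizing $Y$-measure is a half-space of the form $\{\rvu : \delta^\top(\rvu - \rvx) \leq t\}$; choosing $t$ so this half-space has $X$-measure exactly $\underline{p_A}$ and pushing the measure through the shift by $\delta$ yields the standard closed form
\[
\sP(Y \in A) \;\geq\; \Phi\!\left(\Phi^{-1}(\underline{p_A}) - \tfrac{\|\delta\|_2}{\sigma}\right).
\]
The symmetric application (maximizing $Y$-measure subject to $X$-measure at most $\overline{p_B}$) gives
\[
\sP(Y \in B_c) \;\leq\; \Phi\!\left(\Phi^{-1}(\overline{p_B}) + \tfrac{\|\delta\|_2}{\sigma}\right).
\]

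Finally I would combine these two inequalities. For the prediction to be stable it is enough that the lower bound on $\sP(Y\in A)$ exceeds the upper bound on $\sP(Y \in B_c)$; since $\Phi$ is strictly increasing, this reduces to $\Phi^{-1}(\underline{p_A}) - \|\delta\|_2/\sigma \geq \Phi^{-1}(\overline{p_B}) + \|\delta\|_2/\sigma$, i.e. $\|\delta\|_2 \leq \tfrac{\sigma}{2}(\Phi^{-1}(\underline{p_A}) - \Phi^{-1}(\overline{p_B})) = R$, which holds by hypothesis. I expect the main obstacle to be the Neyman--Pearson step: one must argue carefully that the extremal set is exactly a half-space orthogonal to $\delta$, treat the degenerate case $\delta = \vzero$ separately, and verify that projecting onto the linear functional $\delta^\top\rvu$ collapses both multivariate Gaussians to univariate normals so that the $\Phi$ expressions above are exact evaluations of the half-space probabilities rather than loose estimates. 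The remaining $\Phi$ manipulation, the boundary case $\|\delta\|_2 = R$, and the union over $c \neq c_A$ (using $\max_{c\neq c_A}\sP(X \in B_c) \leq \overline{p_B}$) are then routine.
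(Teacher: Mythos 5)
Your proposal is correct and is essentially the proof the paper relies on: the paper gives no proof of its own for this theorem, deferring entirely to Cohen \etal, and your Neyman--Pearson reduction to half-spaces orthogonal to $\delta$ followed by monotonicity of $\Phi$ is exactly that argument. The only caveat is the boundary case you wave off as routine: at $\|\delta\|_2 = R$ the two half-space bounds coincide, so one obtains only a tie rather than strict dominance of $c_A$; Cohen \etal's original statement accordingly requires $\|\delta\|_2 < R$, and the non-strict inequality in the paper's restatement is a minor imprecision that your proof inherits rather than introduces.
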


The above derivation for the certified radius for $\hat{f}$ does not assume anything about the base classifier. We can therefore plug in our modified base classifier, $\textsc{Aggregate}_{i=1}^k f(\rvx_i)$ and similarly prove that the radius will hold with high probability.

\begin{corollary}[Smooth-Reduce certificates]
    \label{thm:Smooth-Max}
     Let $c_A \in [C]$, and $\underline{p_A}', \overline{p_B}' \in [0,1]$ be the probability estimates from the Smooth-Reduce classifier, $\bar{f}$. If 
     \begin{equation*}
        \sP_{\rvz} \left( \bar{f}(\rvx + \rvz) = c_A\right) \geq \underline{p_A}' \geq \overline{p_{B}}' \geq \max_{c \neq c_A} \sP_{\rvz} \left( \bar{f}(\rvx + \rvz) = c\right)\,,
    \end{equation*}
    then $\bar{f}(\rvx + \delta) = c_A$ for all $\delta$ satisfying $\| \delta \|_2 \leq R$, where 
    \begin{equation}
        R = \frac{\sigma_{\rvz}}{2} \left( \Phi^{-1}(\underline{p_A}') - \Phi^{-1}(\overline{p_B}') \right).
        \label{eq:rs_smoothreduce}
    \end{equation}
\end{corollary}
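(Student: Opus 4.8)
The plan is to observe that the preceding theorem (restated from Cohen~\etal) is entirely agnostic to the internal structure of the base classifier: it requires only a map from $\R^d$ into per-class scores, which are then Gaussian-smoothed and compared. So the whole corollary should reduce to \emph{instantiating} that theorem with a new base classifier that bundles patch extraction, per-patch evaluation of $f$, and aggregation into a single map. Concretely, I would define $g_c(\rvy) = \textsc{Aggregate}_{i=1}^k f_c(\mathrm{patch}_i(\rvy))$, where $\mathrm{patch}_i$ is the fixed extract-and-upsample operation producing $\rvx_i$ from its argument, and then write the Smooth-Reduce classifier as $\bar f(\rvx) = \argmax_c \E_{\rvz \sim \gN(0,\sigma^2\rmI)}[g_c(\rvx+\rvz)]$, which is exactly the form of $\hat f$ in \Eqref{eq:cohen_smooth} with $g$ in place of $f$.

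First I would check that $g$ is an admissible base classifier, i.e.\ that each coordinate stays in $[0,1]$. This is the only place where the two aggregation schemes must be treated, and both pass trivially: since $f_c(\cdot)\in[0,1]$, the mean $\tfrac1k\sum_i f_c(\rvx_i)$ and the max $\max_i f_c(\rvx_i)$ both lie in $[0,1]$. (Mean additionally preserves the simplex, $\sum_c g_c = 1$; max does not, but the certificate only needs $g_c\in[0,1]$ coordinatewise, not normalization.) With this verified, the smoothed scores $G_c(\rvx) := \E_{\rvz}[g_c(\rvx+\rvz)]$ are Gaussian convolutions of $[0,1]$-valued functions, so they meet the hypotheses of the theorem verbatim. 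Next I would invoke the single analytic fact powering both Cohen~\etal\ and Salman~\etal: for any measurable $h:\R^d\to[0,1]$, the map $\rvx \mapsto \Phi^{-1}\!\big(\E_{\rvz\sim\gN(0,\sigma^2\rmI)}[h(\rvx+\rvz)]\big)$ is $1/\sigma$-Lipschitz in the $\ell_2$ norm. Applying this to $G_{c_A}$ and to each $G_c$ with $c\neq c_A$, and using the hypotheses $G_{c_A}(\rvx)\ge \underline{p_A}'$ and $G_c(\rvx)\le\overline{p_B}'$, I get for every $\delta$ with $\|\delta\|_2\le R$ that $\Phi^{-1}(G_{c_A}(\rvx+\delta)) \ge \Phi^{-1}(\underline{p_A}') - R/\sigma$ and $\Phi^{-1}(G_c(\rvx+\delta))\le \Phi^{-1}(\overline{p_B}') + R/\sigma$. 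The class margin stays positive precisely when $R \le \tfrac{\sigma}{2}(\Phi^{-1}(\underline{p_A}') - \Phi^{-1}(\overline{p_B}'))$, which forces $\bar f(\rvx+\delta)=c_A$ and yields \Eqref{eq:rs_smoothreduce}. This chain is identical to the proof of the restated theorem, with $g$ substituted for $f$.

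The step I expect to be the real obstacle is reconciling \emph{where the noise enters}. The displayed definition \Eqref{eq:max_rs} adds noise after patch extraction, as $f_c(\rvx_i+\rvz)$, whereas the clean reduction above wants the noise added to $\rvx$ \emph{before} patching, so that the smoothing variable and the adversarial perturbation $\delta$ live in the same space $\R^d$ and the $1/\sigma$-Lipschitz lemma applies directly. Under the reading that patch extraction is internal to $g$ (the single $d$-dimensional Gaussian is applied to $g$'s input), the corollary follows exactly as stated. If instead the patches are noised individually, one must track how a perturbation $\delta$ propagates through the linear extract-and-upsample operators $P_i$ (with $\rvx_i = P_i\rvx$, so that patch $i$ sees perturbation $P_i\delta$), which would import the operator norms $\|P_i\|$ into the radius and change the constant. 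I would therefore fix the former interpretation as the one under which \Eqref{eq:rs_smoothreduce} is meant to hold, and flag this normalization of the patching operators as the one genuinely nontrivial modeling choice behind the ``plug in a new base classifier'' reduction.

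Finally, I would note that the ``with high probability'' qualifier is not about this geometric argument at all: it concerns the Monte-Carlo estimation of $\underline{p_A}'$ and $\overline{p_B}'$ from finitely many samples, exactly as in the \textsc{Certify} procedure of Cohen~\etal. Since the hypotheses of the corollary are stated for the population quantities, the confidence bookkeeping (e.g.\ a Clopper--Pearson lower bound on $\underline{p_A}'$) is orthogonal to the radius derivation and carries over from the base method unchanged.
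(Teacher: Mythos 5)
Your proposal is correct and follows essentially the same route as the paper, whose entire argument for this corollary is the single observation that the restated theorem makes no assumption on the base classifier, so the aggregated map can be substituted in directly. The one place you go beyond the paper---flagging that the noise in \Eqref{eq:max_rs} and in \Algref{alg:certification} is applied per patch rather than to $\rvx$ before patch extraction, which is what is needed for the adversarial perturbation $\delta$ and the smoothing variable to live in the same space (otherwise the operator norms of the extract-and-upsample maps enter the radius)---is a genuine subtlety the paper leaves implicit, and your choice to fix the pre-patching interpretation is exactly the reading under which the stated radius is valid.
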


\begin{small}
\begin{algorithm}[!t]
    \caption{Smooth-Reduce Certification Algorithm}
    \label{alg:certification}
    \begin{algorithmic}
       \STATE \textit{\# certify the robustness of $\bar{f}   $ around $x$}
       \STATE \textbf{function} \textsc{Certify}($f$, $\sigma$, $x$, $n_0$, $n$, $\alpha$)
       \STATE \quad $\{x_i\}$ $\leftarrow$ \textsc{Patchify($x$, k)}
       \STATE \quad $\texttt{counts0} \leftarrow$\textsc{SmoothReduceUnderNoise}$(f, \{x_i\}, n_0, \sigma)$
       \STATE \quad $\hat{c}_A \leftarrow$ top index in \texttt{counts0}
       \STATE \quad $\texttt{counts} \leftarrow \textsc{SmoothReduceUnderNoise}(f, \{x_i\}, n, \sigma)$
       \STATE \quad $\underline{p_A} \leftarrow \textsc{LowerConfBound}$($\texttt{counts}[\hat{c}_A]$, $n$, $1 - \alpha$) 
       \STATE \quad \textbf{if} $\underline{p_A} > \frac{1}{2}$ \textbf{return} prediction $\hat{c}_A$ and radius $\sigma \, \Phi^{-1}(\underline{p_A})$
       \STATE \quad \textbf{else return} ABSTAIN
        \STATE      
        \STATE \textit{\# Sampling with Smooth-Reduce classifiers}  
       \STATE \textbf{function} \textsc{SmoothReduceUnderNoise}$(f,\,\{x_i\}, n, \sigma)$
       \STATE \quad \texttt{counts} $\leftarrow [0, 0, ... C\text{ times}]$
       \STATE \quad \textbf{for} $j=1:n$ 
       \STATE \qquad $\{z_{i,j}\} \rightarrow$ Sample from $\gN(0, \sigma^2\rmI)$
       \STATE \qquad $\{\hat{y}_{i,j}\} = \{f(x_i + z_{i, j})\}$
       \STATE \qquad $\hat{y}_j = \textsc{Reduce} (\{ \hat{y}_{i,j} \}) $~~\text{\# Reduce over patches}
       \STATE \qquad \texttt{counts}$[\argmax_{c \in C} \hat{y}_j]+= 1$  
       \STATE  \quad return \texttt{counts}
    \end{algorithmic}
\end{algorithm}
\end{small}

Following standard practice in evaluating RS algorithms, we modify the \textsc{Predict} and \textsc{Certify} subroutines as in \cite{cohen2019certified}. For the prediction step, we create $n$ copies of our input set. We then add independent noise for each element, and take the aggregate (maximum or average) for each class over all copies. The classifier either returns the most likely class over the aggregated logits over the copies, or abstains if the confidence of the probability estimate is low. We repeat the same modifications for the certification process to estimate lower bounds for the probability of the correct class. \Algref{alg:certification} shows a pseudo-code representation of the certification algorithm. We also show a diagrammatic representation of our approach in \Figref{fig:block}. For the Smooth-Max routine, we also scale the predicted logits using softmax over the classes for each copy. Similar to \cite{cohen2019certified}, our classifier abstains unless the event, $\underline{p_A'} \geq 1/2$ holds with probability larger than $1-\alpha$.

Notice that since we are estimating the lower bound on $\underline{p_A}$, the robustness guarantee holds in high probability. However, we can leverage the benefits of ensembling in each step to improve the success probability. For example, consider that there exists an adversarial example $\delta$ for any sub-classifier $f_i$ such that $\|\delta\|_2 \leq R$. Suppose the probability of such an event occurring can be upper bounded 
by $\alpha$. Then, the probability of $\delta$ to be an adversarial example for $\bar{f}$ is at most $\alpha/k$; see the appendix for a detailed discussion. Theoretically, this allows us to achieve the same performance as $\hat{f}$ with $k$ times fewer samples. However, in practice, this may lead to a high abstention rate if the base classifier $f$ is itself not robust enough to noise. 

We now show that Smooth-Max classifiers are inherently ``harder'' to attack than smooth classifiers. For this, we adapt a proof technique from Salman~\etal~\cite{salman2019provably} to get the following result.

\begin{theorem}
Let $\bar{f}$ and $\hat{f}$ be Smooth-Max and smooth classifiers as defined above. Let $R_{\bar{f}}$ and $R_{\hat{f}}$ be their corresponding certified radii derived using \Algref{alg:certification}. Assuming that the correct class is $c_A$ and the $|\gX|$ is high enough, then,
$R_{\bar{f}} \geq R_{\hat{f}}$.
\end{theorem}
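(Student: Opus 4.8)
The plan is to reduce the inequality $R_{\bar f} \geq R_{\hat f}$ to a comparison of the two correct-class probabilities, and then to establish that comparison using the max-aggregation structure together with the ``$|\gX|$ high enough'' hypothesis. First I would exploit the fact that, as in Cohen~\etal's certification, the algorithm's abstention branch only certifies when $\underline{p_A} > \tfrac{1}{2}$ and uses the simplification $\overline{p_B} = 1 - \underline{p_A}$, so that the radius collapses to $R = \tfrac{\sigma}{2}\left(\Phi^{-1}(\underline{p_A}) - \Phi^{-1}(1-\underline{p_A})\right) = \sigma\,\Phi^{-1}(\underline{p_A})$, a function of $\underline{p_A}$ alone. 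Since $\Phi^{-1}$ is strictly increasing, it then suffices to prove $\underline{p_A}^{\bar f} \geq \underline{p_A}^{\hat f}$. Because both lower confidence bounds are computed from the same number of samples $n$ at the same level $\alpha$, and \textsc{LowerConfBound} is monotone in the empirical count, this reduces in turn to comparing the population quantities $p_A^{\bar f} = \E_{\rvz}[\max_{i} f_{c_A}(\rvx_i + \rvz)]$ against $p_A^{\hat f} = \E_{\rvz}[f_{c_A}(\rvx + \rvz)]$. Here I would pass to the soft/smoothed classifier and borrow the Lipschitz framing of the radius from Salman~\etal, which lets us argue about the smoothed probabilities directly while preserving the radius expression.

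The key inequality is $\E_{\rvz}[\max_{i} f_{c_A}(\rvx_i + \rvz)] \geq \max_{i} \E_{\rvz}[f_{c_A}(\rvx_i + \rvz)]$, which follows immediately from $\max_i a_i \geq a_j$ for every $j$: take expectations term by term and then maximize over $j$. Thus the Smooth-Max correct-class probability is at least the best single-patch smoothed probability. To close the argument I would invoke the hypothesis that $|\gX|$ is large enough, phrased as the claim that with sufficiently many sampled patches at least one patch $\rvx_{i^\star}$ is as discriminative for $c_A$ under noise as the full resized image, i.e. $\max_i \E_{\rvz}[f_{c_A}(\rvx_i + \rvz)] \geq \E_{\rvz}[f_{c_A}(\rvx + \rvz)] = p_A^{\hat f}$. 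Chaining the two displays gives $p_A^{\bar f} \geq p_A^{\hat f}$, hence $\underline{p_A}^{\bar f} \geq \underline{p_A}^{\hat f}$, and finally $R_{\bar f} \geq R_{\hat f}$ by monotonicity of $\Phi^{-1}$.

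The main obstacle is the last step, namely formalizing ``$|\gX|$ high enough.'' The clean max-of-expectation versus expectation-of-max inequality only compares $\bar f$ to the best patch, not to the whole image, so one genuinely needs an assumption (or a coverage/concentration argument over the patch-sampling distribution) guaranteeing that some drawn patch matches or exceeds the full image on $c_A$. I would make this precise as a monotonicity-in-$k$ statement: $\E_{\rvz}[\max_{i \leq k} f_{c_A}(\rvx_i + \rvz)]$ is nondecreasing in $k$ and surpasses $p_A^{\hat f}$ once enough patches are sampled, which is exactly the regime the empirical sections rely on. A secondary subtlety, which the abstention-form radius $R = \sigma\,\Phi^{-1}(\underline{p_A})$ conveniently sidesteps, is controlling the runner-up probability $\overline{p_B}$; were one to use the full two-term radius, I would additionally need $\overline{p_B}^{\bar f} \leq \overline{p_B}^{\hat f}$, which again rests on the aggregation sharpening the correct class at the expense of its competitors.
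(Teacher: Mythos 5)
Your approach is essentially the paper's: reduce $R_{\bar f}\geq R_{\hat f}$ via monotonicity of $\Phi^{-1}$ to a comparison of the correct-class probability estimates, and use the fact that a max over the patch set dominates the value at any single input. The one obstacle you flag --- formalizing ``$|\gX|$ high enough'' so that the best patch is at least as good as the full image --- is exactly where the paper's proof does something you did not quite land on: it simply assumes (or enforces, by appending) that the original image itself is an element of the input set, i.e.\ $\rvx_c=\rvx$ for some index $c$. With that, no coverage or concentration argument over the patch-sampling distribution is needed, and the comparison can even be made pointwise per noise sample, $\max_i f(\rvx_i+\rvz_j)\geq f(\rvx_c+\rvz_j)$ for every $j$, so the empirical estimates satisfy $\underline{p_A}'\geq\underline{p_A}$ term by term rather than only in expectation; your intermediate inequality $\E_{\rvz}[\max_i f_{c_A}(\rvx_i+\rvz)]\geq\max_i\E_{\rvz}[f_{c_A}(\rvx_i+\rvz)]$ then becomes unnecessary. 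Your observation that the abstention-form radius $R=\sigma\,\Phi^{-1}(\underline{p_A})$ sidesteps having to control $\overline{p_B}$ separately is correct and is implicitly how the paper's argument closes as well.
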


The proof relies on the Smooth-Max algorithm selecting the patch with highest probability at every step. We then use the monotonicity of $\Phi^{-1}$ to finish the argument. We provide a detailed proof in the supplement. 

\noindent\textbf{Smooth-Max versus Smooth-Mean.}
 We now reflect upon the two flavors of Smooth-Reduce. By construction, if the base classifier succeeds on patches then Smooth-Max should intuitively perform at least as well than standard randomized smoothing. During inference, the Smooth-Max classifier picks the best of possible patches in the input set. Therefore, in expectation, we hope that the Smooth-Max classifier will be more robust. The Smooth-Mean classifier, on the other hand, improves predictions using averaging to reduce the variance. Intuitively, patches that are classified with low confidence are countered by patches with very high confidence. Our observation is that the Smooth-Mean classifier abstains less frequently as compared to Smooth-Max and base RS classifiers. This also showcases one of the limitations of Smooth-Max classifiers: if the base classifier, $f$ is very robust to noise, then a bad patch can consistently be chosen leading to the Smooth-Max classifier abstaining more often. In such a case, the Smooth-Mean classifier rectifies this by not relying on a single patch, leading to fewer abstentions. We also see evidence of this behavior in our results as discussed below. 

\section{Experiments and Results}
\label{sec:expts}

\subsection{Certificates for Image Classifiers}

We evaluate our approach by certifying classifiers trained on CIFAR-10 and ImageNet. To meaasure the performance, we consider three metrics: (1) The approximate certified accuracy with respect to the radius, (2)  Average Certified Radius (ACR), and (3) the abstention rate. We define average certified radius as in \cite{Zhai2020MACERAA}, where for each $(x_i,y_i)$ in the test set, $D_\text{test}$, and the corresponding certified radius, $R_i$, we calculate the ACR as $\frac{1}{|D_\text{test}|}\sum_{(x_i, y_i} \mathbf{1} [\bar{f}(x_i)=y_i] R_i$. The abstention rate is defined as the fraction of abstentions for the given test set. 
We show that our approach improves upon all the metrics over other randomized smoothing methods. 

\noindent\textbf{Setup:} In order to test our approach, we leverage the base classifiers trained in \cite{salman2019provably} for CIFAR-10 and ImageNet. The base classifiers have been adversarially trained to be robust to varying Gaussian distributions as well as smooth adversarial attacks. We use the models with the highest reported performance for each value of the variance, $\sigma$. Further, we use $n_0=100$ samples for prediction, and $n=100,000$ for certifying CIFAR-10, and similarly $n_0=100, n=100,000$ for Imagenet. We choose the best reported models in \cite{Zhai2020MACERAA,Alfarra2020DataDR} and \cite{Horvath2021BoostingRS} for comparisons with the same setting unless otherwise stated. We retrained models for MACER~\cite{Zhai2020MACERAA} for CIFAR-10 and Imagenet. However, for others, we used reported numbers as the code and models were not available at the time of publication. Additional details are available in the appendix and the accompanying supplementary material.

\begin{figure}[ht]
    \centering
    \includegraphics[width=0.99\linewidth]{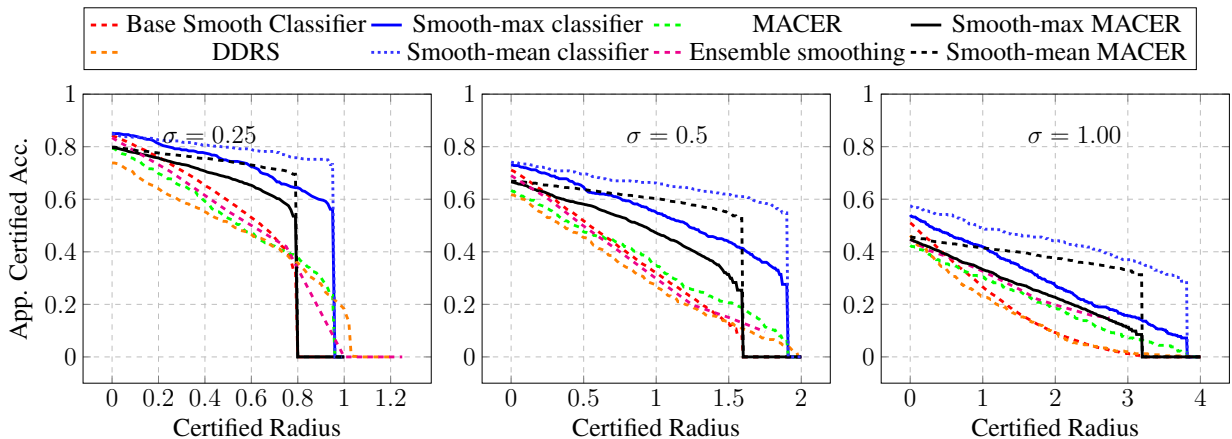}
    \caption{\textbf{Certified Accuracies for Cifar-10.}\sl \small Smooth-Max classifiers provide better certificates as compared to other approaches. For the same number of copies of the input set, we see that Smooth-Reduce outperforms both in terms of certified radius and also abstains less frequently than all other approaches. Additionally, Smooth-Reduce can be effortlessly integrated with other improvements in RS (for example, MACER~\cite{Zhai2020MACERAA}) without any retraining to achieve improved certificates.} 
    \label{fig:cifar10_comp}
\end{figure}

\noindent\textbf{Results on CIFAR-10:} For CIFAR-10, we use the pretrained Resnet-110 from \cite{salman2019provably}. Since the inputs are required to be $32\times32$ images, we resize each input image to be $36\times36$ and sample $25$ patches of the size $32\times32$ using either a random or uniform sampling process. We certify the first $500$ CIFAR-10 test images with both variants of our Smooth-Reduce algorithm for $\sigma={0.25, 0.5,1.0}$. Certificates for a larger sample of the dataset are also available in the Appendix. Note that we use the corresponding adversarially trained noise models from \cite{salman2019provably} as our base models.  \Figref{fig:cifar10_comp} shows results our experiments. We also compare Smooth-Reduce with standard randomized smoothing~\cite{cohen2019certified,salman2019provably}, DDRS~\cite{Alfarra2020DataDR}, MACER~\cite{Zhai2020MACERAA} and Ensemble smoothing~\cite{Horvath2021BoostingRS}. We demonstrate that the Smooth-Mean and Smooth-Max algorithms outperform all other approaches by a significant margin in terms of certified accuracy. Further, the average certified radius for Smooth-Mean and Smooth-Max exceed that of other approaches by at least 25\% and 14\% respectively. We also see that the improvements increase in magnitude as the noise variance increases. Finally, also note that, we are also successful in reducing abstention rates. \Tableref{t:cifar10} shows the average certified radii and abstention rates for the various certification algorithms. Also see that Smooth-Mean classifiers tend to abstain far less often as compared to Smooth-Max classifiers.

We also study the effect of confidence calibration by ensuring both SmoothAdv and Smooth-Reduce classifiers see the same number of overall patches. To ensure fair comparison, we run SmoothAdv certification with $N=100k$ and $\alpha=0.001$ and Smooth-Reduce with $N=10,000, k=10,$ and use $\alpha$ to be $0.01$ for each sub-classifier. \Figref{fig:conf_calibration}(b) shows that under the same number of samples, Smooth-Reduce is able to certify larger radii while improving the certified accuracy.

\begin{small}
\begin{table}[ht]
        \centering
        \caption{\textbf{Results for CIFAR-10.}Average Certified Radii and Abstention rates for CIFAR-10. The best performer is bolded and the second best is italicized. Results with a $*$ are from reported results in the publication.}
        \label{t:cifar10}
        \begin{tabular}{c c c c c c c}
        \toprule[1.5pt]
        Algorithm & \multicolumn{3}{c}{ACR $\uparrow$} & \multicolumn{3}{c}{Abst. Rate $\downarrow$} \\
        \midrule
        $\sigma$ & 0.25  & 0.5 & 1.0 & 0.25  & 0.5 & 1.0 \\
        \midrule 
         Smooth-Max (Ours)  & 0.713  & 1.269 & 2.04 &  0.017 &  0.047 & 0.1064  \\
         Smooth-Mean (Ours) & \textbf{0.759} & \textit{1.426} & \textit{2.665} & \textbf{0.005} & \textit{0.046} & \textit{0.030} \\
         Smooth-Max Macer (Ours) & 0.701 & 1.209 & 1.910 & 0.018 & 0.058 & 0.110 \\
         Smooth-Mean Macer (Ours) & \textit{0.754} & \textbf{1.462} & \textbf{2.736} & 0.005 & \textbf{0.0136} & \textbf{0.020} \\
         Standard RS~\cite{cohen2019certified,salman2019provably} & 0.605 & 0.899 & 1.185 & 0.039 & 0.1099 & 0.2582 \\
         MACER~\cite{Zhai2020MACERAA} & 0.517 & 0.682 & 0.767 & 0.206 & 0.366 & 0.576 \\
         DDRS~\cite{Alfarra2020DataDR} & 0.678 & 0.942 & 1.185 & 0.048 & 0.122 & 0.244\\
         Ensemble RS$^*$~\cite{Horvath2021BoostingRS} & 0.583 & 0.756 & 0.788 & - & -  & -  \\
        \bottomrule 
        \end{tabular}
\end{table}
\end{small}
\begin{small}
\begin{table}[ht]
        \centering
        \caption{\textbf{Results for Imagenet.}Average Certified Radius and Abstention rates for ImageNet. The best performer is bolded and the second best is italicized. Results with a $*$ are reported results in the publication. }
        \begin{tabular}{c c c c c c c}
        \toprule[1.5pt]
        Algorithm & \multicolumn{3}{c}{ACR $\uparrow$} & \multicolumn{3}{c}{Abst. Rate $\downarrow$} \\
        \midrule
        $\sigma$ & 0.25  & 0.5 & 1.0 & 0.25  & 0.5 & 1.0 \\
        \midrule 
         Smooth-Max (Ours)  & \textit{0.767} & \textit{1.453} & \textit{2.611} & \textit{0.008} & \textit{0.038}  & \textit{0.108}  \\
         Smooth-Mean (Ours) & \textbf{0.786} & \textbf{1.513} & \textbf{2.931} & \textbf{0.002} & \textbf{0.024} & \textbf{0.048}\\
         Standard RS~\cite{cohen2019certified,salman2019provably} &  0.729 & 1.327 & 2.204 & 0.02 & 0.098 & 0.22\\
         MACER$^*$~\cite{Zhai2020MACERAA} & 0.544 & 0.831 & 1.008 & - & - & - \\
         Ensemble RS$^*$~\cite{Horvath2021BoostingRS} & 0.545 & 0.868 & 1.108 & - & -& -  \\
        \bottomrule 
        \end{tabular}
\end{table}
\end{small}
\noindent\textbf{Results on Imagenet:} We also test our approach on $500$ randomly chosen images from Imagenet to certify a pretrained Resnet-20 model from \cite{salman2019provably}. We resize our inputs to $256\times256$ and sample $224\times224$ sub-patches. We use $4, 8,$ and $16$ patches for our approach with $n_0=100$ and $n=100k$ for certification.  Smooth-Reduce improves upon standard randomized smoothing as shown in \Figref{fig:imagenet_comps}. Specifically, Smooth-Mean performs the best, having 32.9\% relatively higher average certified radius as compared to standard randomized smoothing. Smooth-Max performs the second-best. Also notice that Smooth-Max performance remains stable with the number of patches.

\begin{figure}[ht]
    \centering
    \includegraphics[width=0.99\linewidth]{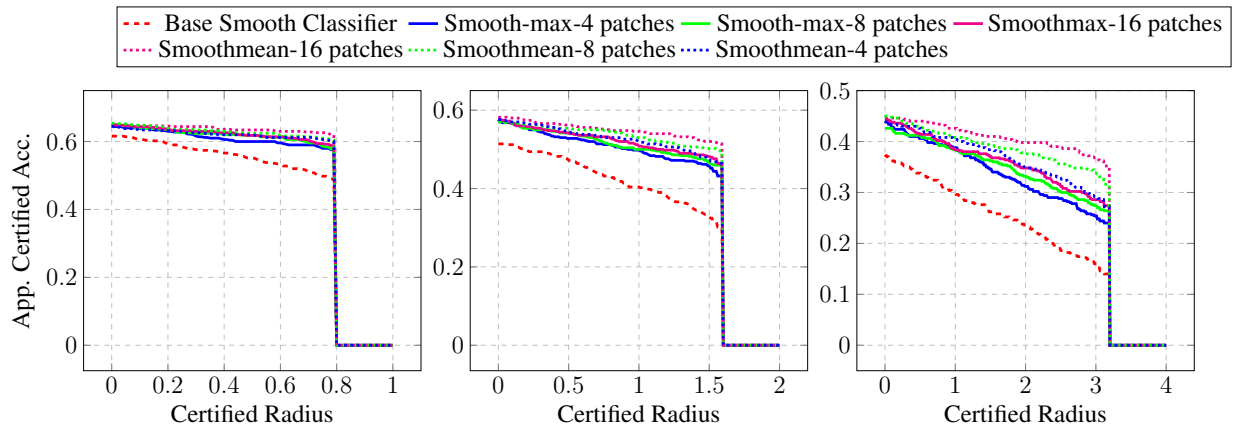}
\caption{\textbf{Certification for ImageNet.}\sl \small We see similar performance improvements of Smooth-Reduce over standard RS. Increasing number of patches does not affect Smooth-Max certificates significantly. However, Smooth-Mean classifiers have a higher approximate certified accuracy as the number of patches increase, especially as the noise variance increases. More detailed results can be found in the appendix.}
\label{fig:imagenet_comps}
\end{figure}

\subsection{Certificates for Video Classifiers}
\label{subsec:video_clf}

Video classifiers often employ aggregation over chunks from the video stream to tackle the problem of varying number of frames~\cite{crasto2019mars}(see \Figref{fig:video_block} in appendix). Therefore, we propose Smooth-Reduce as a natural method to certify such classifiers. While standard RS certificates have not been reported for such models, we observe in \Figref{fig:vidresults}(a) and \Figref{appdxfig:vid_results}(see Appendix) that certified accuracies using RS are still low. As a remedy, we adapt our Smooth-Reduce algorithm to videos. 

While the natural approach here would be to simply look at overlapping chunks as analogues for patches, initial tests showed catastrophic loss of accuracy when we use single chunks for prediction. We therefore sample  overlapping sub-videos with $t$ frames instead. Each sub-video consists of a fixed number of chunks; each with $m$ frames. The base video classifier aggregates over these chunks to produce a prediction. We  repeat the same process of smoothing and aggregation over the sub-videos instead of chunks, and label this as Smooth-Reduce-$(t,m)$ where $t$ is the number of frames in each sub-video and $m$ is the number of frames in each chunk.  \Figref{fig:video_block} in the appendix shows a pictorial representation;see \Secref{appdx:sec_vid_classifiers} for a more detailed description. Note here that the base classifier itself is an ensemble over multiple $16$ frame chunks.

\noindent\textbf{Experiments and Results.} We test our approach on 3D ResNeXt-101 RGB \cite{xie2017aggregated} trained on UCF-101~\cite{soomro2012ucf101}.
We retrain models initialized with weights from \cite{crasto2019mars} using clips of 16 consecutive RGB frames with Gaussian noise augmentation. 
Similar to the setting of \cite{crasto2019mars}, we use SGD with weight decay of $0.0005$, momentum of $0.9$, and initial learning rate of $0.1$. 
We used the first train split and the first test split for training and testing our model, respectively. Additional training details can be found in the appendix.

For inference, the video classifier \cite{crasto2019mars} follows these steps: (1) the input video stream is split into non overlapping chunks of $16$ frames each, (2) the model predictions on these chunks are averaged, and returned as the output class.
We run Smooth-Reduce certification by first sampling $64$ frame or $128$ frame sub-videos for a video stream (analogous to patching for images) to create the input set. Then we plug in the video classifier inference routine to predict classes for noisy copies of each sub-video. See \Figref{fig:vidresults} for results. Note that Smooth-Max and Smooth-Mean both outperform the standard randomized smoothing classifier. 

\noindent\textbf{Limitations of Randomized Smoothing for Videos.} We encountered several challenges while attempting to certify video classifiers. A significant challenge was training noise robust classifiers. We observe that adding Gaussian noise to video data often led to catastrophic decreases in accuracy. This could be an artifact of the architecture which averages predictions over frames by itself. Further, the memory requirements often became insurmountable to get high probability certificates. Our certificates here have been estimated using $n_0=10$ samples for prediction, and $n=1000$ samples for certification with a failure probability of $\alpha=0.001\%$. However, Smooth-Reduce allows for lower sample complexity (see appendix), allowing for Smooth-Mean models to still achieve non-trivial certified accuracies.

\begin{figure}[ht]
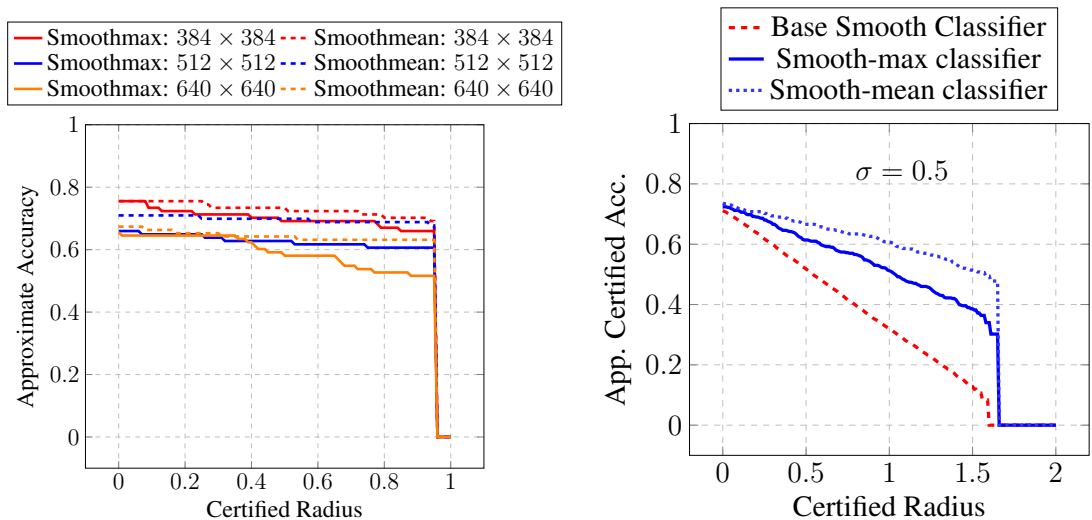

\centering
\begin{tabular}{c c}
    \includegraphics[width=0.45\linewidth]{Plots/imagenet_vary_025.tex} &
   \includegraphics[width=0.4\linewidth]{Plots/cifar10_same_samples_05.tex}
\end{tabular}
    \caption{\textbf{Effect of resizing.(L)} \sl Notice that as the input size increases, the information in each patch correspondingly decreases. The base classifier performs worse overall for each patch leading to lower certified accuracy. (b) \textbf{Certification with Confidence Calibration (R)}. Under the same number of overall samples and calibrated failure probabilites, Smooth-Reduce out-performs standard RS~\cite{salman2019provably} in both certification radius and certified accuracy.}
    \label{fig:varying}
    \label{fig:conf_calibration}
\end{figure}

\subsection{Ablation Studies.} We further analyse the various components of our approach.

\noindent\textbf{Number of patches:} We measure the effect of the number of samples used for Smooth-Reduce certification on Imagenet classifiers.
\Figref{fig:imagenet_comps} shows that Smooth-Max classifiers are relatively unaffected by the number of samples chosen. However, we see that increasing number of patches improves performance of Smooth-Mean certificates. This can be attributed to better empirical estimates as the number of samples increase, and also verifies our theoretical analysis; see appendix.
Note too that this difference is more evident for higher noise variances. 

\noindent\textbf{Effect of Resizing:} Another component of Smooth-Reduce is the resizing step undertaken while sampling. While theoretically it should not affect the radius, the base classifier does assume that the features would be of a certain size. We analyse the effect of the resizing step by resizing Imagenet test images to $384\times384, 512\times512$ and, $640\times640$ and sampling $16$ patches of $224\times224$ randomly. In \Figref{fig:varying}(a), we observe that as resizing becomes more extreme, the certified accuracy falls in tandem with base accuracy.

\noindent\textbf{Random v/s dense sampling:} Since sampling of patches plays a large role in creating a diverse input set, we also analyse the effect of two sampling approaches; dense and uniform random. For random sampling, we select patches randomly with replacement from the resized input image, and discarding any `invalid' patches that fall outside the image borders. For dense sampling, we sample overlapping patches with a specified stride length. We evaluate if the sampling approach affects the certificates by sampling $25$ patches for each method. We observe that the sampling process does not affect the certification process as long as the number of patches are high enough~(see \Figref{appdxfig:cifar10_random_dense} in appendix). 

\noindent\textbf{Effect of subvideo/chunk sizes:} We analyze the effect using different subvideo and chunk sizes on UCF101 certifcation. Specifically, we use either 64 or 128 frames for the size of subvideos, and 8 or 16 frames for the size of chunks. \Figref{fig:vidresults} suggest that we would benefit from having a larger subvideo or chunk sizes. In fact, we observe that using 128 frame subvideos and 64 frame chunks yield the highest certification accuracy.

\begin{figure}[!t]
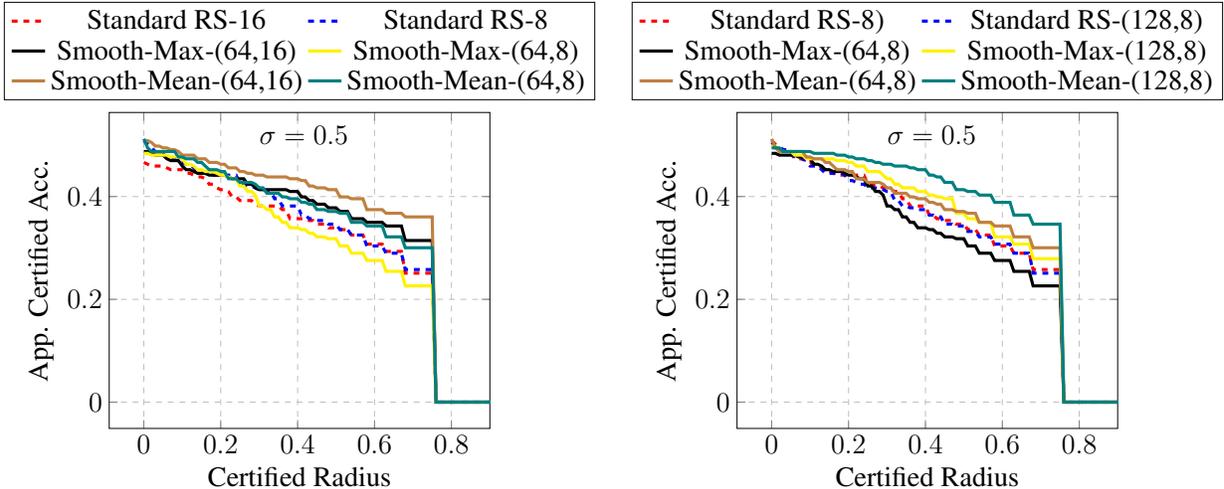

\centering
    \begin{tabular}{c c}
     \includegraphics[width=0.48\linewidth]{Plots/ucf101_ablation_chunk_size.tex} &
     \includegraphics[width=0.48\linewidth]{Plots/ucf101_ablation_subvideo_size.tex}
    \end{tabular}
    \caption{\textbf{Results on UCF-101.} \sl \small For standard RS classifiers, the number represents the number of frames in each chunk. For Smooth-Reduce classifiers, the first number is the subvideo size, and the second is the number of frames in each chunk.  (\textbf{Left}) \textit{Varying chunk sizes.}Our modified Smooth-Reduce classifiers provide higher certified accuracies as compared to standard RS on UCF-101 videos. Also observe that for a fixed subvideo size, RS and Smooth-Reduce classifiers using larger chunks (16 frames over 8 frames) are more robust. (\textbf{Right}) \textit{Varying subvideo sizes.} Smooth-Reduce classifiers using larger subvideo sizes are more robust. The difference is more pronounced for Smooth-Mean classifiers which outperform standard RS by a large margin. More results can be found in the Appendix.}
    \label{fig:vidresults}
\end{figure}

\section{Discussion and Conclusions}
\label{sec:discussion}

We present Smooth-Reduce, an extension of the randomized smoothing approach proposed in \cite{cohen2019certified}. We empirically and theoretically proved that Smooth-Reduce classifiers improve over standard randomized smoothing in terms of certified radii, as well as abstention rate. Our approach relies on the performance boosting properties of ensemble classifiers, which we emulate by creating an input set using patches. We show better certification performance as compared to other smoothing methods under two different aggregation schemes. A major benefit is that our approach requires no additional re-training. 

Our approach also does not make any assumptions on the base classifier. Therefore, Smooth-Reduce can plugged in effortlessly into other certified classifiers, such as MACER~\cite{Zhai2020MACERAA}, and achieve improved certificates. Some limitations of our approach persist. Firstly, we require higher inference-time computation than standard RS approaches. However, note that in comparison to other ensembling approaches~\cite{Horvath2021BoostingRS,Yang2021OnTC}, our method does not require training multiple classifiers. Further, we have not studied adaptive attacks for this scheme, and constructing reasonable attacks for such classifiers (and verifying these certificates empirically) is a complex research question in and of itself. We leave these directions to future work.

\section*{Acknowledgements}

Part of this work was performed when AJ was a summer intern at Bosch AI, where it was supported by DARPA grant HR11002020006. AJ, MC, MP, and CH would also like to acknowledge NSF grants CCF-2005804 and CCF-1815101,  USDA/NIFA grant 2021-67021-35329, and ARPA-E DIFFERENTIATE grant DE-AR0001215.

\bibliographystyle{splncs04}
\bibliography{egbib}

\begin{thebibliography}{10}
\providecommand{\url}[1]{\texttt{#1}}
\providecommand{\urlprefix}{URL }
\providecommand{\doi}[1]{https://doi.org/#1}

\bibitem{addepalli2021boosting}
Addepalli, S., Jain, S., Sriramanan, G., Babu, R.V.: Boosting adversarial
  robustness using feature level stochastic smoothing. In: Proceedings of the
  IEEE/CVF Conference on Computer Vision and Pattern Recognition. pp. 93--102
  (2021)

\bibitem{Alfarra2020DataDR}
Alfarra, M., Bibi, A., Torr, P.H.S., Ghanem, B.: Data dependent randomized
  smoothing. ArXiv  \textbf{abs/2012.04351} (2020)

\bibitem{Athalye2018obfuscated}
Athalye, A., Carlini, N., Wagner, D.: Obfuscated gradients give a false sense
  of security: Circumventing defenses to adversarial examples. In: ICML (2018)

\bibitem{carlini2017groundtruth}
Carlini, N., Katz, G., Barrett, C., Dill, D.L.: Ground-truth adversarial
  examples. arXiv  (2017)

\bibitem{Carlini2017cwl2}
Carlini, N., Wagner, D.: Towards evaluating the robustness of neural networks.
  IEEE (SP)  (2017)

\bibitem{cohen2019certified}
Cohen, J., Rosenfeld, E., Kolter, Z.: Certified adversarial robustness via
  randomized smoothing. In: ICML. PMLR (2019)

\bibitem{crasto2019mars}
Crasto, N., Weinzaepfel, P., Alahari, K., Schmid, C.: Mars: Motion-augmented
  rgb stream for action recognition. In: Proceedings of the IEEE/CVF Conference
  on Computer Vision and Pattern Recognition. pp. 7882--7891 (2019)

\bibitem{Dosovitskiy2021AnII}
Dosovitskiy, A., Beyer, L., et~al.: An image is worth 16x16 words: Transformers
  for image recognition at scale. In: ICLR (2020)

\bibitem{Goodfellow2018existence}
Goodfellow, I.: Defense against the dark arts: An overview of adversarial
  example security research and future research directions. arxiv preprint
  \textbf{1806.04169} (2018)

\bibitem{Goodfellow-et-al-2016}
Goodfellow, I., Bengio, Y., Courville, A.: Deep Learning. MIT Press (2016),
  \url{http://www.deeplearningbook.org}

\bibitem{Horvath2021BoostingRS}
Horv'ath, M.Z., M{\"u}ller, M.N., Fischer, M., Vechev, M.T.: Boosting
  randomized smoothing with variance reduced classifiers. ArXiv
  \textbf{abs/2106.06946} (2021)

\bibitem{huang2017safety}
Huang, X., Kwiatkowska, M., Wang, S., Wu, M.: Safety verification of deep
  neural networks. Computer Aided Verification (CAV)  (2017)

\bibitem{jeong2021smoothmix}
Jeong, J., Park, S., Kim, M., Lee, H.C., Kim, D.G., Shin, J.: Smoothmix:
  Training confidence-calibrated smoothed classifiers for certified robustness.
  Advances in Neural Information Processing Systems  \textbf{34} (2021)

\bibitem{katz2017reluplex}
Katz, G., Barrett, C., Dill, D., Julian, K., Kochenderfer, M.: Reluplex: An
  efficient smt solver for verifying deep neural networks. arXiv preprint
  arXiv:1702.01135  (2017)

\bibitem{katz2017towards}
Katz, G., Barrett, C., Dill, D.L., Julian, K., Kochenderfer, M.J.: Towards
  proving the adversarial robustness of deep neural networks. arXiv preprint
  (2017)

\bibitem{lecuyer2019certified}
Lecuyer, M., Atlidakis, V., Geambasu, R., Hsu, D., Jana, S.: Certified
  robustness to adversarial examples with differential privacy. In: 2019 IEEE
  Symposium on Security and Privacy (SP). pp. 656--672. IEEE (2019)

\bibitem{pmlr-v108-levine21a}
Levine, A.J., Feizi, S.: Improved, deterministic smoothing for {$L_1$}
  certified robustness. In: ICML (2021)

\bibitem{Liu2020EnhancingCR}
Liu, C., Feng, Y., Wang, R., Dong, B.: Enhancing certified robustness of
  smoothed classifiers via weighted model ensembling. ArXiv
  \textbf{abs/2005.09363} (2020)

\bibitem{madry2018towards}
Madry, A., Makelov, A., Schmidt, L., Tsipras, D., Vladu, A.: Towards deep
  learning models resistant to adversarial attacks. In: ICLR (2018),
  \url{https://openreview.net/forum?id=rJzIBfZAb}

\bibitem{Raghunathan2018CertifiedDA}
Raghunathan, A., Steinhardt, J., Liang, P.: Certified defenses against
  adversarial examples. In: ICLR (2018)

\bibitem{Raghunathan2018semidefinite}
Raghunathan, A., Steinhardt, J., Liang, P.S.: Semidefinite relaxations for
  certifying robustness to adversarial examples. In: NeurIPS (2018)

\bibitem{salman2019provably}
Salman, H., Yang, G., Li, J., Zhang, P., Zhang, H., Razenshteyn, I., Bubeck,
  S.: Provably robust deep learning via adversarially trained smoothed
  classifiers. In: NeurIPS (2019)

\bibitem{Samangouei2018DefenseGANPC}
Samangouei, P., Kabkab, M., Chellappa, R.: Defense-gan: Protecting classifiers
  against adversarial attacks using generative models. ArXiv
  \textbf{abs/1805.06605} (2018)

\bibitem{soomro2012ucf101}
Soomro, K., Zamir, A.R., Shah, M.: Ucf101: A dataset of 101 human actions
  classes from videos in the wild (2012)

\bibitem{sukenik2021intriguing}
S{\'u}ken{\'\i}k, P., Kuvshinov, A., G{\"u}nnemann, S.: Intriguing properties
  of input-dependent randomized smoothing. arXiv preprint arXiv:2110.05365
  (2021)

\bibitem{szegedy2013intriguing}
Szegedy, C., Zaremba, W., Sutskever, I., Bruna, J., Erhan, D., Goodfellow, I.,
  Fergus, R.: Intriguing properties of neural networks. International
  Conference on Learning Representations  (2014)

\bibitem{teng2019ell_1}
Teng, J., Lee, G.H., Yuan, Y.: $\ell_1$ adversarial robustness certificates: a
  randomized smoothing approach. OpenReview  (2019),
  \url{https://openreview.net/forum?id=H1lQIgrFDS}

\bibitem{tjeng2017evaluating}
Tjeng, V., Xiao, K., Tedrake, R.: Evaluating robustness of neural networks with
  mixed integer programming. arXiv preprint arXiv:1711.07356  (2017)

\bibitem{Trockman2022PatchesAA}
Trockman, A., Kolter, J.Z.: Patches are all you need? ArXiv
  \textbf{abs/2201.09792} (2022)

\bibitem{vershynin2018high}
Vershynin, R.: High-dimensional probability: An introduction with applications
  in data science, vol.~47. Cambridge university press (2018)

\bibitem{mart}
Wang, Y., Zou, D., Yi, J., Bailey, J., Ma, X., Gu, Q.: Improving adversarial
  robustness requires revisiting misclassified examples. In: ICLR (2019)

\bibitem{Wang2021ImprovingAR}
Wang, Y.: Improving adversarial robustness for free with snapshot ensemble.
  ArXiv  \textbf{abs/2110.03124} (2021)

\bibitem{wang2019improving}
Wang, Y., Zou, D., Yi, J., Bailey, J., Ma, X., Gu, Q.: Improving adversarial
  robustness requires revisiting misclassified examples. In: ICLR (2019)

\bibitem{Weng2018TowardsFC}
Weng, T.W., Zhang, H., Chen, H., Song, Z., Hsieh, C.J., Boning, D.S., Dhillon,
  I.S., Daniel, L.: Towards fast computation of certified robustness for relu
  networks. In: ICML (2018)

\bibitem{wong2018provable}
Wong, E., Kolter, Z.: Provable defenses against adversarial examples via the
  convex outer adversarial polytope. In: ICML. PMLR (2018)

\bibitem{xie2017aggregated}
Xie, S., Girshick, R., Dollár, P., Tu, Z., He, K.: Aggregated residual
  transformations for deep neural networks (2017)

\bibitem{Yang2020RandomizedSO}
Yang, G., Duan, T., Hu, E.J., Salman, H., Razenshteyn, I.P., Li, J.: Randomized
  smoothing of all shapes and sizes. In: ICML (2020)

\bibitem{Yang2021OnTC}
Yang, Z., Li, L., Xu, X., Kailkhura, B., Xie, T., Li, B.: On the certified
  robustness for ensemble models and beyond. ArXiv  \textbf{abs/2107.10873}
  (2021)

\bibitem{yin2020defense}
Yin, Z.and~Wang, H., Wang, J., Tang, J., Wang, W.: Defense against adversarial
  attacks by low-level image transformations. International Journal of
  Intelligent Systems  (2020)

\bibitem{Zhai2020MACERAA}
Zhai, R., Dan, C., He, D., Zhang, H., Gong, B., Ravikumar, P., Hsieh, C.J.,
  Wang, L.: Macer: Attack-free and scalable robust training via maximizing
  certified radius. ArXiv  \textbf{abs/2001.02378} (2020)

\bibitem{trades}
Zhang, H., Yu, Y., Jiao, J., Xing, E., El~Ghaoui, L., Jordan, M.: Theoretically
  principled trade-off between robustness and accuracy. In: ICML. pp.
  7472--7482 (2019)

\bibitem{Zhang2018mixupBE}
Zhang, H., Ciss{\'e}, M., Dauphin, Y., Lopez-Paz, D.: mixup: Beyond empirical
  risk minimization. ArXiv  \textbf{abs/1710.09412} (2018)

\end{thebibliography}

\clearpage
\appendix

\section{Results and Discussion}
\label{apdx:sec_add_results}

We support our observations in \Secref{sec:expts} with some additional results presented here.

\subsection{Additional Results for CIFAR-10}
\begin{table}[!htbp]
    \centering
    \caption{\textbf{Detailed results on CIFAR-10.}}
    \resizebox{\textwidth}{!}{
    \begin{tabular}{c |c| c c c c c c c c c c c c c c c c c}
        \toprule[1.5pt]
       $\sigma$ & Approach & \multicolumn{15}{c}{Radii} \\
       \midrule[1.5pt]
       {} & {} & 0.0 & 0.25 & 0.5 & 0.75 & 1.0 & 1.25 & 1.5 & 1.75 & 2.0 & 2.25 & 2.5 & 2.75 & 3.0 & 3.25 & 3.5 & 3.75 & 4.0  \\
       \midrule
       \multirow{7}{*}{0.25} & Smooth-Max (Ours) &  85 & 79 & 74 & 65 & 0 & 0 & 0 & 0 & 0 & 0 & 0 & 0 & 0 & 0 & 0 & 0 & 0
          \\  
        {} &  Smooth-Mean (Ours) & 84 & 82 & 79 & 76 & 0 & 0 & 0 & 0 & 0 & 0 & 0 & 0 & 0 & 0 & 0 & 0 & 0 \\
        {} & SmoothAdv~\cite{salman2019provably}  & 84 & 73 & 58 & 39 & 0 & 0 & 0 & 0 & 0 & 0 & 0 & 0 & 0 & 0 & 0 & 0 & 0 \\
        {} & MACER-Smooth-Max (Ours) & 80 & 75 & 68 & 60 & 0 & 0 & 0 & 0 & 0 & 0 & 0 & 0 & 0 & 0 & 0 & 0 & 0\\
        {} & MACER-Smooth-Mean (Ours) & 80 & 78 & 75 & 71 & 0 & 0 & 0 & 0 & 0 & 0 & 0 & 0 & 0 & 0 & 0 & 0 & 0 \\
        {} & MACER~\cite{Zhai2020MACERAA} & 79 & 67 & 52 & 40 & 0 & 0 & 0 & 0 & 0 & 0 & 0 & 0 & 0 & 0 & 0 & 0 & 0
 \\
        {} & DDRS~\cite{Alfarra2020DataDR}  & 73 & 61 & 51 & 39 & 18 & 0 & 0 & 0 & 0 & 0 & 0 & 0 & 0 & 0 & 0 & 0 & 0 \\
        {} & Ensemble~\cite{Horvath2021BoostingRS} & 83 & 70 & 55 & 42 & 0 & 0 & 0 & 0 & 0 & 0 &  0 & 0 & 0 & 0 & 0 & 0 & 0\\
        \midrule
        \multirow{7}{*}{0.50} & Smooth-Max (Ours) &  73 & 69 & 64 & 60 & 55 & 49 & 43 & 36 & 0 & 0 & 0 & 0 & 0 & 0 & 0 & 0 & 0 \\  
        {} &  Smooth-Mean (Ours) & 74 & 72 & 69 & 67 & 66 & 64 & 61 & 58 & 0 & 0 & 0 & 0 & 0 & 0 & 0 & 0 & 0\\
        {} & SmoothAdv~\cite{salman2019provably} & 72 & 61 & 50 & 40 & 31 & 20 & 10 & 0 & 0 & 0 & 0 & 0 & 0 & 0 & 0 & 0 & 0\\
        {} & MACER-Smooth-Max (Ours) & 67 & 62 & 59 & 54 & 48 & 41 & 35 & 26 & 0 & 0 & 0 & 0 & 0 & 0 & 0 & 0 & 0\\
        {} & MACER-Smooth-Mean (Ours) & 67 & 66 & 64 & 62 & 60 & 58 & 56 & 53 & 0 & 0 & 0 & 0 & 0 & 0 & 0 & 0 & 0 \\
        {} & MACER~\cite{Zhai2020MACERAA} & 63 & 56 & 47 & 43 & 34 & 25 & 20 & 11 & 0 & 0 & 0 & 0 & 0 & 0 & 0 & 0 & 0 \\
        {} & DDRS~\cite{Alfarra2020DataDR}  & 61 & 53 & 45 & 35 & 27 & 19 & 13 & 7 & 0 & 0 & 0 & 0 & 0 & 0 & 0 & 0 & 0 \\
        {} & Ensemble~\cite{Horvath2021BoostingRS} & 65 & 59 & 49 & 45 &  38 & 32 & 26 & 19 & 0 & 0 & 0 & 0 & 0 & 0 & 0 & 0 & 0 \\
        \midrule
        \multirow{7}{*}{1.0} & Smooth-Max (Ours) &  54 & 50 & 47 & 44 & 41 & 37 & 34 & 31 & 27 & 23 & 20 & 18 & 15 & 13 & 10 & 7 & 0 \\  
        {} &  Smooth-Mean (Ours) &  57 & 55 & 54 & 52 & 49 & 48 & 46 & 45 & 44 & 42 & 41 & 39 & 37 & 34 & 32 & 28 & 0\\
        {} & SmoothAdv~\cite{salman2019provably} & 50 & 44 & 36 & 28 & 21 & 18 & 13 & 9 & 7 & 5 & 4 & 2 & 1 & 0 & 0 & 0 & 0\\
        {} & MACER-Smooth-Max (Ours) & 45 & 42 & 39 & 35 & 33 & 30 & 27 & 23 & 22 & 19 & 16 & 13 & 11 & 9 & 7 & 3 & 0\\
        {} & MACER-Smooth-Mean (Ours) & 44 & 44 & 42 & 41 & 40 & 40 & 39 & 39 & 38 & 36 & 36 & 35 & 35 & 33 & 31 & 28 & 0 \\
        {} & MACER~\cite{Zhai2020MACERAA} & 42 & 39 & 35 & 32 & 30 & 27 & 25 & 20 & 18 & 15 & 12 & 9 & 7 & 6 & 4 & 1 & 0 \\
        {} & DDRS~\cite{Alfarra2020DataDR}  & 46 & 39 & 33 & 27 & 22 & 19 & 15 & 12 & 9 & 5 & 4 & 3 & 2 & 1 & 0 & 0 & 0 \\
        {} & Ensemble~\cite{Alfarra2020DataDR} & 49 & 43 & 37 & 30 & 23 & 18 & 16 & 13 & 11 & 9 & 5 & 0 & 0 & 0 & 0 & 0 & 0\\
        \bottomrule
    \end{tabular}}
    
    \label{appdxt:add_cifar10_results}
\end{table}

\subsection{Additional Results for ImageNet}
\label{appdx:imagenet_results}

\begin{table}[!htbp]
    \centering
    \caption{\textbf{Detailed results on ImageNet.}}
    \begin{tabular}{c |c| c c c c c c c c c}
        \toprule[1.5pt]
       $\sigma$ & Approach & \multicolumn{9}{c}{Radii} \\
       \midrule[1.5pt]
       {} & {} & 0.0 & 0.5 & 1.0 & 1.5 & 2.0 & 2.5 & 3.0 & 3.5 & 4.0 \\
       \midrule
       \multirow{9}{*}{0.25} & Smooth-Max-16(PGD) &  65 & 61 & 0 & 0 & 0 & 0 & 0 & 0 & 0 \\  
        {} &  Smooth-Mean-16 (PGD) & 64 & 63 & 0 & 0 & 0 & 0 & 0 & 0 & 0\\
        {} & Smooth-Max-16 (DDN) & 72 & 67 & 0 & 0 & 0 & 0 & 0 & 0 & 0 \\
        {} & Smooth-Mean-16 (DDN) &  72 & 69 & 0 & 0 & 0 & 0 & 0 & 0 & 0\\
        & Smooth-Max-8(PGD) &  65 & 61 & 0 & 0 & 0 & 0 & 0 & 0 & 0 \\  
        {} &  Smooth-Mean-8 (PGD) & 65 & 62 & 0 & 0 & 0 & 0 & 0 & 0 & 0\\
        {} & Smooth-Max-8 (DDN) & 72 & 67 & 0 & 0 & 0 & 0 & 0 & 0 & 0\\
        {} & Smooth-Mean-8 (DDN) & 72 & 68 & 0 & 0 & 0 & 0 & 0 & 0 & 0\\
        {} & SmoothAdv~\cite{salman2019provably} & 61 & 55 & 0 & 0 & 0 & 0 & 0 & 0 & 0\\
        \midrule
        \multirow{9}{*}{0.50} & Smooth-Max-16(PGD) &  57 & 54 & 50 & 48 & 0 & 0 & 0 & 0 & 0 \\  
        {} &  Smooth-Mean-16 (PGD) & 58 & 56 & 54 & 52 & 0 & 0 & 0 & 0 & 0\\
        {} & Smooth-Max-16 (DDN)  & 66 & 61 & 56 & 51 & 0 & 0 & 0 & 0 & 0\\
        {} & Smooth-Mean-16 (DDN) & 66 & 64 & 61 & 59 & 0 & 0 & 0 & 0 & 0 \\
        {} & Smooth-Max-8 (PGD) & 57 & 54 & 50 & 46 & 0 & 0 & 0 & 0 & 0 \\
        {} & Smooth-Mean-8 (PGD) & 56 & 55 & 53 & 50 & 0 & 0 & 0 & 0 & 0 \\
        {} & Smooth-Max-8 (DDN) & 66 & 61 & 56 & 48 & 0 & 0 & 0 & 0 & 0 \\
        {} & Smooth-Mean-8 (DDN) & 65 & 63 & 59 & 54 & 0 & 0 & 0 & 0 & 0 \\
        {} & SmoothAdv~\cite{salman2019provably} & 51 & 47 & 40 & 32 & 0 & 0 & 0 & 0 & 0\\
        \midrule
        \multirow{9}{*}{1.0} & Smooth-Max-16(PGD) &  44 & 41 & 38 & 37 & 34 & 31 & 28 & 0 & 0 \\  
        {} &  Smooth-Mean-16 (PGD) & 44 & 44 & 42 & 41 & 39 & 38 & 36 & 0 & 0\\
        {} & Smooth-Max-16 (DDN) & 54 & 50 & 46 & 42 & 37 & 33 & 28 & 0 & 0  \\
        {} & Smooth-Mean-16 (DDN) & 55 & 53 & 51 & 50 & 47 & 43 & 38 & 0 & 0\\ 
        {} & Smooth-Max-8 (PGD) & 42 & 41 & 38 & 36 & 33 & 30 & 27 & 0 & 0\\
        {} & Smooth-Mean-8 (PGD) & 45 & 43 & 40 & 39 & 37 & 35 & 33 & 0 & 0\\
        {} & Smooth-Max-8 (DDN)  & 54 & 49 & 44 & 39 & 35 & 31 & 25 & 0 & 0\\
        {} & Smooth-Mean-8 (DDN) & 54 & 53 & 50 & 46 & 42 & 37 & 34 & 0 & 0\\
        {} & SmoothAdv~\cite{salman2019provably} & 37 & 33 & 29 & 26 & 23 & 18 & 16 & 0 & 0\\
        \bottomrule
    \end{tabular}
    
    \label{appdxt:add_cifar10_results}
\end{table}
\clearpage

\noindent\textbf{Random Sampling versus Dense Sampling.} In order to understand the effect of sampling, we analyse the performance of Smooth-Reduce on CIFAR-10 under two sampling schemes: (1) randomly sampling patches under an Uniform distribution, and, (2) densely sampling patches with a specific stride length. Intuitively, the two sampling schemes should not affect performance given enough number of patches. \Figref{appdxfig:cifar10_random_dense} shows that this conjecture holds, with both Smooth-Max and Smooth-Mean presenting comparable performance under the two sampling schemes.

\begin{figure}
    \centering
    \includegraphics[width=0.5\linewidth]{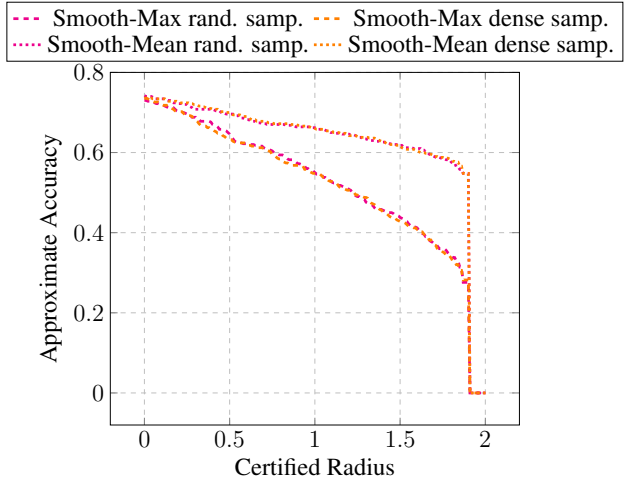}
    \caption{\textbf{Effect of Sampling Algorithm } The sampling algorithm does not affect the certified accuracy in any significant manner for both Smooth-Reduce classifiers, suggesting that the only hyperparameter of consequence is the number of patches ($k$).}
    \label{appdxfig:cifar10_random_dense}
\end{figure}

\noindent\textbf{Performance with the same number of inferences.} An important question that arises is if the improved certification performance is an artifact of the higher number of samples. We show that this is not the case by certifying SmoothAdv~\cite{salman2019provably} and Smooth-Reduce with the same number of samples, $N=100k$. For ensuring fair comparison, we reduce the failure rate probability rate per Smooth-Reduce sub-classifier to $\alpha=0.01$ in comparison to $\alpha=-0.001$ for SmoothAdv. As we observe in \Figref{appdxfig:conf_bounds_full}, we achieve higher certified accuracies as well as better certified radii, given the same amount of compute. 
\begin{figure}
    \centering
    \includegraphics[width=0.85\linewidth]{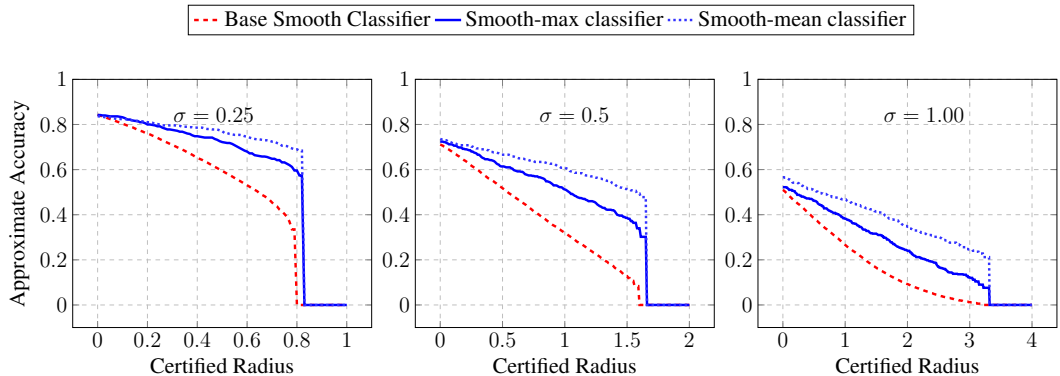}
    \caption{Confidence calibrated Smooth-Reduce}
    \label{appdxfig:conf_bounds_full}
\end{figure}

\section{Video Classifiers}
\label{appdx:sec_vid_classifiers}

Video classifiers come in a large variety of flavors; 3d convolutional~\cite{}, hybrid conv-LSTM models~\cite{}, optical flow-based models~\cite{}, and others. In this paper, we only focus on certifying pure RGB frame based models. This is both due to the models being less computationally expensive as well as achieving high benign performance without a large amount of heuristic tuning. We specifically use the RGB ResNext-101 models from \cite{crasto2019mars} for certifying UCF-101 videos. \cite{crasto2019mars} propose a hybrid RGB-optical flow model as well, which we propose can be adapted easily to a wide variety of video classification tasks. They train two ResNext-101 modesl with 3D convolutions, the first on RGB frame chunks, and, the second on optical flow representations. We just use the first model for certification. However, randomised smoothing for such jointly trained multi-model classifiers is a separate and interesting technical discussion in itself. 

\noindent\textbf{Training.} For training, we initialize our ResNext-101 with weights from the model in~\cite{crasto2019mars} pretrained on the Kinetics dataset. We then train the 3D CNN with $8$ or $16$ frame chunks from the UCF-101 training set. Following \cite{crasto2019mars}, we use SGD with weight decay of $0.0005$, momentum of $0.9$, and initial learning rate of $0.1$. In order to make the classifiers robust to Gaussian noise, we also use Gaussian noise augmentation similar to \cite{cohen2019certified}. Further, we also use the noise-variance scheduling scheme presented in \cite{salman2019provably}, by slowly incrementing noise from $0$ to the required noise levels every $20$ epochs. For inference, the video classifier averages the logits of non-overlapping $8$ or $16$ frame chunks sequentially sampled from the video stream. A pictorial depiction can be seen in \Figref{fig:video_block}.
We used the first train split and the first test split for training and testing our model, respectively. Our base model achieves $\sim86\%$ benign accuracy on the testset. 

For Smooth-Reduce prediction, we follow the procedure presented above in \Secref{subsec:video_clf} by modifying the inference step. We first sample $k$ overlapping sub-videos from the original test video-stream. For our experiments, we use $64$ and $128$ frame subvideos. Next, we create $n$ copies for each sub-video and run the base video inference described above with $16$ or $8$ frame chunks for each noisy copy. The predictions are then aggregated over the copies using the selected \textsc{Aggregate} (\emph{max}/\emph{mean}) Smooth-Reduce methods. The algorithm then returns the class with the largest count. We show results of this in \Figref{appdxfig:vid_results} for noise varinaces of $0.25, 0.5,$ and $1.0$. Notice that while the certified radii are still somewhat low, Smooth-Reduce outperforms standard Randomized smoothing, certifying not only larger radii but also providing greater certified accuracies. We also see that higher chunk sizes and sub-video sizes result in better certification performance in terms of certified accuracy.
 
\begin{figure}[htp]
    \centering
    \includegraphics[width=0.99\linewidth, trim={3cm, 8cm, 3cm, 6cm}, clip]{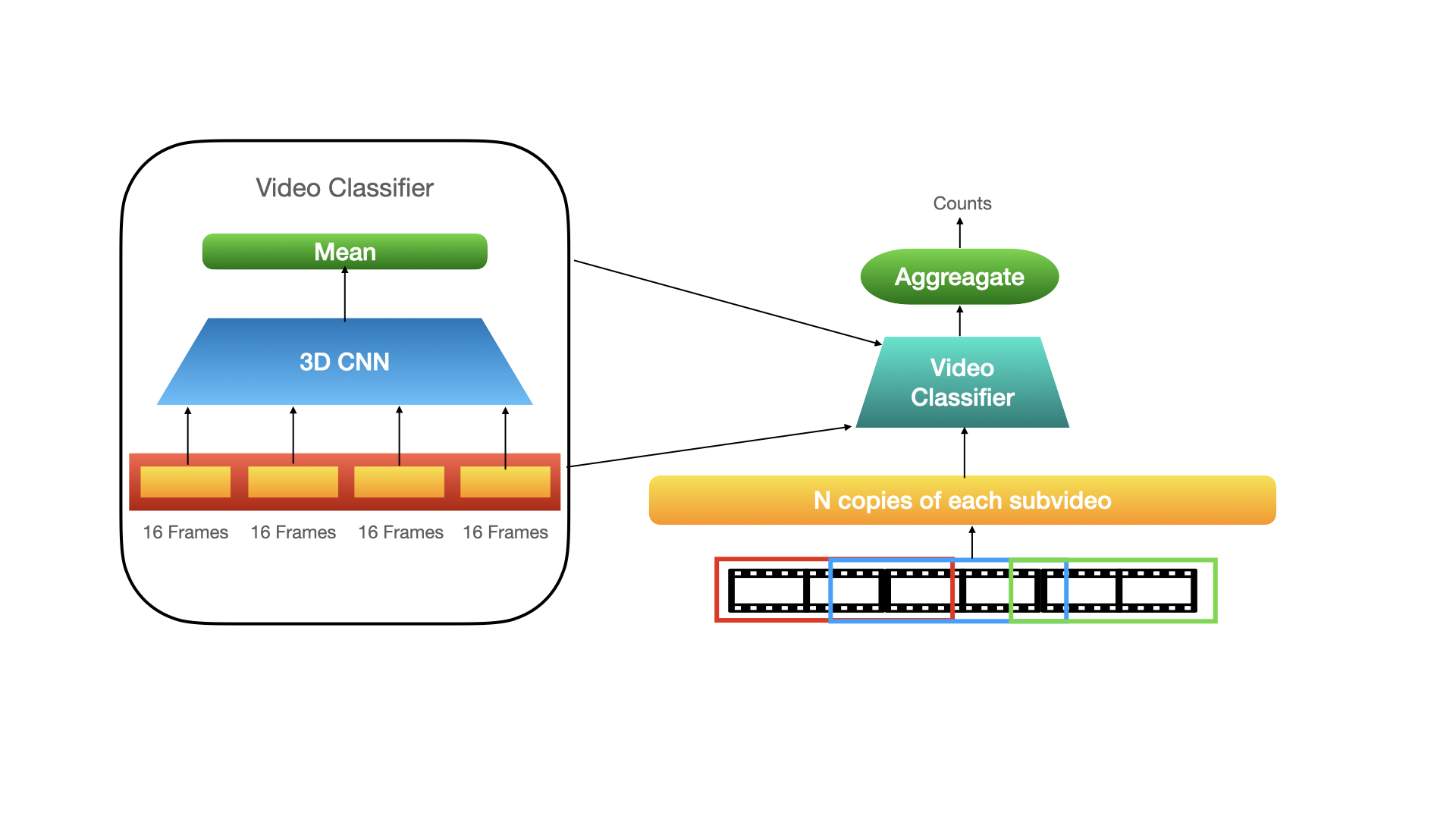}
    \caption{\textbf{Smooth-Reduce for Videos:} Video classifiers include averaging over frames or chunks of frames. Observing that larger chunk sizes provide better certificates, Smooth-Reduce takes this a step further by first sampling overlapping sub-videos with $4$ or $8$ chunks of $16$ frames each. We then aggregate the smooth predictions over sub-videos.}
    \label{fig:video_block}
\end{figure}
\begin{figure}[hb]
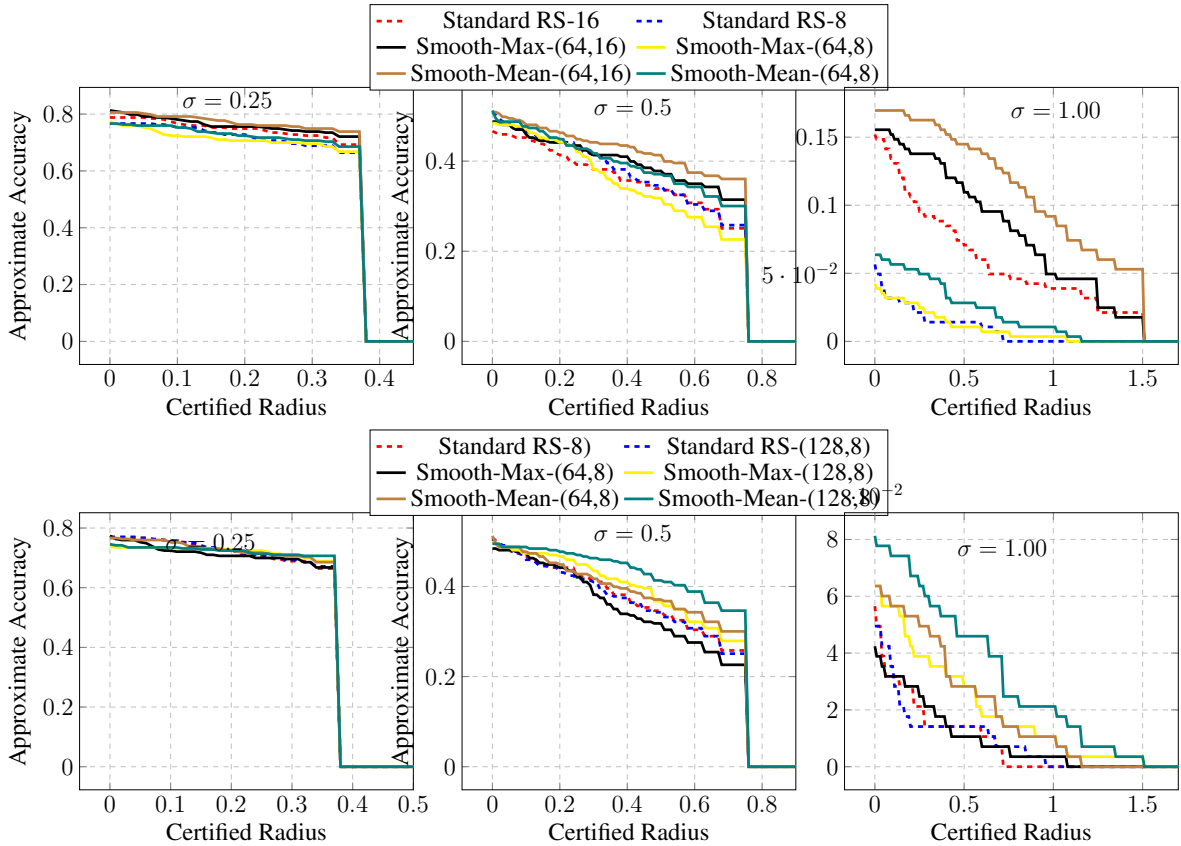

    \centering
    \begin{tabular}{c}
    \includegraphics[width=0.95\linewidth]{Plots/app_ucf101_ablation_chunk_size.tex} \\
    \includegraphics[width=0.95\linewidth]{Plots/app_ucf101_ablation_subvideo_size.tex}
    \end{tabular}
    \caption{\textbf{Additional video results.} Notice that Smooth-Mean certifies larger radii while presenting higher certified accuracies. Another point of interest is that larger sub-videos and larger chunks show better certification performance. Model nomenclature is as follows; for standard randomized smoothing, models are named as \emph{Standard RS}-\textsc{chunk-size}; for Smooth-Reduce, we use \emph{Smooth-\{max/mean\}}-\textsc{sub-video size, chunk size}, in terms of number of frames.}
    \label{appdxfig:vid_results}
\end{figure}
\clearpage

\section{Deferred Theorems and Proofs}
\label{appdx:sec_thms}

\begin{theorem}[Smooth-Max classifiers certify larger radii]
    Let $\hat{f}$ and $\bar{f}$ be the standard RS classifier and the Smooth-Max classifier as defined, if $R_{\hat{f}}$ and $R_{\bar{f}}$ represent the certified radii for the two classifiers for a specific input, $\rvx$, then
    \begin{align*}
        R_{\bar{f}} &\geq R_{\hat{f}}
    \end{align*}
    for all $\rvx$.
    \label{thm:improved_radii}
\end{theorem}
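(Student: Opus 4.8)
The plan is to reduce the radius comparison to a comparison of the two top-class probabilities, and then to establish the latter by a coupling argument. Recall from \Algref{alg:certification} that both classifiers report a radius of the form $R = \sigma\,\Phi^{-1}(\underline{p_A})$ (equivalently $R = \tfrac{\sigma}{2}(\Phi^{-1}(\underline{p_A}) - \Phi^{-1}(\overline{p_B}))$ with $\overline{p_B} = 1 - \underline{p_A}$), computed by the same subroutine using the same $n$, $\sigma$, and failure level $\alpha$. Since $\Phi^{-1}$ is strictly increasing on $(0,1)$, the desired inequality $R_{\bar{f}} \geq R_{\hat{f}}$ follows as soon as $\underline{p_A}^{\bar{f}} \geq \underline{p_A}^{\hat{f}}$. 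Because both lower bounds are produced by the identical one-sided binomial (Clopper--Pearson) lower-confidence-bound procedure applied to the empirical class counts, it suffices in turn to compare the population quantities $p_A^{\bar{f}} = \sP_{\rvz}(\bar{f}(\rvx+\rvz)=c_A)$ and $p_A^{\hat{f}} = \sP_{\rvz}(\hat{f}(\rvx+\rvz)=c_A)$. This is exactly the ``monotonicity of $\Phi^{-1}$'' half of the sketch.

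The core of the argument is a coupling. I would drive both classifiers with the same Gaussian draw $\rvz \sim \gN(0,\sigma^2\rmI)$ and compare their decisions sample-wise. On a draw $\rvz$, the standard smooth classifier scores a success precisely when $c_A = \argmax_c f_c(\rvx + \rvz)$, whereas Smooth-Max, by construction, selects at each draw the patch $\rvx_i$ carrying the most confident prediction and declares its top class. Taking the full, un-cropped input to be one of the patches in $\gX$, the Smooth-Max classifier always has access to at least the score that drives $\hat{f}$; this is the sense in which one hopes the success event of $\hat{f}$ sits inside that of $\bar{f}$, which would give $p_A^{\bar{f}} \geq p_A^{\hat{f}}$ draw-by-draw and hence in expectation, finishing the proof via the first paragraph.

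The delicate point --- and the step I expect to be the main obstacle --- is that this containment is \emph{not} unconditional: on a given draw the most confident patch may vote for an incorrect class $c' \neq c_A$ even though the full input would have voted $c_A$, so Smooth-Max can fail where $\hat{f}$ succeeds. This is precisely where the hypotheses ``the correct class is $c_A$'' and ``$|\gX|$ is high enough'' must enter. I would formalize them as the assumption that the sampled patches are informative and correctly centered, so that the probability the selected (most confident) patch predicts $c_A$ is at least $p_A^{\hat{f}}$ and increases monotonically toward $1$ as $|\gX|$ grows. Under this modeling assumption the distributional domination $p_A^{\bar{f}} \geq p_A^{\hat{f}}$ holds, and combining it with the strict monotonicity of $\Phi^{-1}$ yields $R_{\bar{f}} \geq R_{\hat{f}}$. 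I would close by noting that the same reduction expresses the improvement as $R_{\bar{f}} - R_{\hat{f}} = \sigma\big(\Phi^{-1}(p_A^{\bar{f}}) - \Phi^{-1}(p_A^{\hat{f}})\big)$ (up to the lower-bound estimation), which is nonnegative and widens as aggregation sharpens the top-class probability.
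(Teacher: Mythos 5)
Your proposal follows essentially the same route as the paper's proof: reduce the radius comparison to $\underline{p_A}' \geq \underline{p_A}$ via monotonicity of $\Phi^{-1}$, and then obtain the probability comparison by ensuring the original (center-crop) image is itself one of the patches, so that the per-draw maximum over patches dominates the single-image score. The ``delicate point'' you flag --- that the most confident patch may vote for a wrong class, so domination of the class-$c_A$ score does not by itself imply domination of the argmax success probability --- is real, but the paper's own proof silently makes the same simplification (it compares averages of soft class-$A$ scores rather than the empirical argmax counts that \Algref{alg:certification} actually tallies), so your explicit assumption isolating this step is, if anything, more careful than the published argument.
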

\begin{proof}
            Assume that the correct class predicted by both $\hat{f}$ and $\bar{f}$ is $A$. Let $\underline{p_A}$ be the probability estimate for the $\hat{f}$ smooth classifier, and  $\underline{p_A'}$ be that for $\bar{f}_4$ for $n$ samples. 
            
            We make a minor assumption here, that the input set is large enough that it contains the original image, $\rvx_c = \rvx$. This can be easily enforced by appending the original image to the input set. 
            
            Now, 
            \begin{eqnarray}
                    \underline{p_A} &= \frac{1}{n} \sum_{j=1}^n f(\rvx_c + \rvz_j) \\
                    \underline{p_A'} &= \frac{1}{n} \sum_{j=1}^n \max_i f(\rvx_i + \rvz_j) 
            \end{eqnarray}, where $\rvx_c$ refers to the center-crop of the resized $\rvx$.
            
            By definition, $\underline{p_A'}$ will always be greater than or equal to $\underline{p_A}$. Therefore, the above statement holds true.
\end{proof}            

We also prove that Smooth-Reduce classifiers have a lower failure probability for a given perturbation $\delta$.

\begin{theorem}[Smooth-Reduce confidence bounds]
    Let $\hat{f}$ and $\bar{f}$ be the smooth and Smooth-Reduce classifiers defined above. Let $f_i$ be the sub-classifiers in $\bar{f}$. Let $R$ be the certified radius for $\bar{f}$ w derived using the Smooth-Reduce \textsc{Certify} subroutine with $n$ samples and $k$ patches, with probability $\alpha_1$. Let $R_i,~i=1:k$ be the same for the sub-classifiers, $f_i$, derived using standard smoothing certification with $n$ samples with probability $\alpha$. Then, for Smooth-Mean classifiers, $\alpha_1 \leq  e^{-k\alpha}(2e\alpha)^{k/2}$
    \label{thm:conf_bounds}
\end{theorem}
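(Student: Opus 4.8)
The plan is to reduce the failure event of the Smooth-Mean classifier to a tail event for a binomial count of individually fooled sub-classifiers, and then to control that tail with a Chernoff-type (Poisson form) concentration bound. First I would fix a perturbation $\delta$ with $\|\delta\|_2 \leq R$ and, for each $i$, let $E_i$ denote the event that $\delta$ is adversarial for sub-classifier $f_i$ --- equivalently, that $f_i$'s certificate of radius $R_i \geq R$ fails to hold at $\rvx_i + \delta$. By hypothesis $\sP(E_i) \leq \alpha$. Set $X = \sum_{i=1}^k \mathbf{1}[E_i]$, the number of fooled sub-classifiers.

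The crucial structural step is to argue that $\delta$ can flip the Smooth-Mean prediction only if at least half of the sub-classifiers are individually fooled. Since $\bar{f}$ aggregates the per-patch logit vectors by their mean \emph{before} taking the $\argmax$, its decision margin for $c_A$ is the average of the per-patch margins; as long as a strict majority of patches retain their certified correct margin, the averaged margin for $c_A$ stays dominant and $\bar{f}$ is not fooled. Hence the failure event $\{\bar{f}(\rvx+\delta) \neq c_A\}$ is contained in $\{X \geq k/2\}$, so $\alpha_1 \leq \sP(X \geq k/2)$.

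Treating the $E_i$ as (at most) independent with $\sP(E_i) \leq \alpha$, the count $X$ is stochastically dominated by a $\mathrm{Binomial}(k,\alpha)$ variable with mean $\mu = k\alpha$. Applying the Poisson-form Chernoff bound $\sP(X \geq a) \leq e^{-\mu}(e\mu/a)^a$ (valid for $a > \mu$, i.e. $\alpha < 1/2$) at $a = k/2$ yields
\[
\alpha_1 \leq \sP(X \geq k/2) \leq e^{-k\alpha}\left(\frac{e\,k\alpha}{k/2}\right)^{k/2} = e^{-k\alpha}(2e\alpha)^{k/2},
\]
which is exactly the claimed bound.

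I expect the main obstacle to be the structural step, namely rigorously justifying that a Smooth-Mean failure forces $X \geq k/2$. This requires a quantitative handle on how averaging the logits relates to the individual margins; the clean majority statement holds under the mild normalization that the per-patch margins are comparably scaled (e.g. confined to a common range and symmetric across patches), so I would state that assumption explicitly and, if necessary, weaken the threshold $k/2$ to a fraction determined by the worst-case margin ratio, absorbing the change into the constants $a$ and $\mu$. Likewise, the independence of the $E_i$ is an idealization that should be flagged, since the patches share the underlying image; one could instead invoke negative association or a union-type bound if independence is too strong. The concentration step itself is entirely standard, so the substance of the theorem lives in this reduction together with the independence hypothesis.
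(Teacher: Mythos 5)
Your proposal follows essentially the same route as the paper's proof: reduce the Smooth-Mean failure event to the event that at least $k/2$ sub-classifiers are individually fooled (the paper does this by assuming the soft mean-ensemble is equivalent to a hard majority vote), model the per-patch failures as independent Bernoulli$(\alpha)$ variables, and apply the Poisson-form Chernoff bound from Vershynin at threshold $k/2$ to obtain $e^{-k\alpha}(2e\alpha)^{k/2}$. Your version is in fact somewhat more careful, since you explicitly flag the two assumptions (the majority-vote reduction for mean aggregation and the independence of the failure events across patches) that the paper invokes without justification.
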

\begin{proof}

    
    
    

    
    

    Assume that our Smooth-Mean classifer, $\bar{f}$ CERTIFY method returns some certified radius, $R$ with the correct class, $A$ for the given number of samples, $N$ and patches, $p$. Further, we can use \textsc{CERTIFY} from \cite{cohen2019certified} to estimate certified radii, $R_i$, for each of the subclassifiers, $f_i$ in $\bar{f}$. We assume here that the hard-classifier ensemble and the soft ensemble (that Smooth-Mean uses) are equivalent. Under this assumption, as Smooth-Mean relies on majority vote, in order for $\rvx + \delta$ to be an adversarial example, we need at least half of the classifiers to fail. To analyse this, let $m_i$ be a Bernoulli random variable such that it takes the value $1$ if classifier $f_i$ fails and $0$ otherwise. 
    
    Thus, 
    \begin{align*}
         \sP [\|\delta\| < R_i] = \sP[m_i = 1] = \alpha
    \end{align*}
    
    Therefore, for $\rvx + \delta$ to be an adversarial example, 
    \begin{align*}
        \sP [ \|\delta\| < R] &= \sP[\sum_{i=1}^k m_i \geq k/2]\\
    \end{align*}
    Using a Chernoff bound~\cite[Thm. 2.3.1]{vershynin2018high} for the sum of independent Bernoulli random variables, we get;
    \begin{align*}
        \sP[\sum_{i=1}^k m_i \geq k/2] \leq e^{-k\alpha}(2e\alpha)^{k/2}
    \end{align*}
    
    Note that this function decays very quickly with $k$, and therefore can be easily tuned to get better confidence bounds.  
\end{proof} 

While our approach relies on analysing a specific version of the adversarial example which attacks all classifiers simultaneously, we recognize that this might not be the case in general. For example, another attack may presume to make the classifier abstain every time. We do not analyse this case here, and leave the details to future work.

\subsection{Analysing Logits under Smooth-Reduce Ensembling}

We further validate our claims regarding confidence intervals of Smooth-Reduce certificates by analysing the logit distribution for standard RS and Smooth-Reduce classifiers. 

\noindent\textbf{Setup:} We study the distributions of logits for the most probable and the second most probable class for standard RS and Smooth-Reduce classifiers. For this, we consider a few test datapoints for both images, and videos and certify the best SmoothAdv classifier. Further, we certify both Smooth-Max and Smooth-Mean classifiers under the same setup. We then plot histograms of the distributions of logits. \Figref{appdxfig:logit_cifar10} and \Figref{appdxfig:logit_vids} show exemplars of generated histograms. 

\noindent\textbf{Observations and Inferences.} Notice that the certified radius, $R$ from \Eqref{eq:smooth_radius} is proportional to the difference in the estimated probabilities of the two most probable classes. This difference is also proportional to the classifier margin. Therefore, in order to get better certificates, we need to ensure that the smooth-classifier presents large margins, as well higher probability estimate for the true class, $c_A$. Also, in order to reduce abstentions, $\underline{p_A} > 1/2$ and its variance must be low. 

We see that Smooth-Max and Smooth-Mean outperform SmoothAdv on both these criterion in \Figref{appdxfig:logit_cifar10}. Notice here that $R$ depends on the difference between the means of the distribution for the most probable class (blue) and the second most probable class (orange). We see that while Smooth-Max outperforms SmoothAdv in terms of the overall proabability estimate, the margin itself is not improved much. This may lead to higher abstention rates as well as lower certificates. However, Smooth-Mean showcases not only higher estimates of $\underline{p_A}$ but also a lower variance, thus improving upon both the certified radius and probability of abstention.

\begin{figure}[htp]
    \begin{tabular}{c c c} 
        \includegraphics[width=0.33\linewidth]{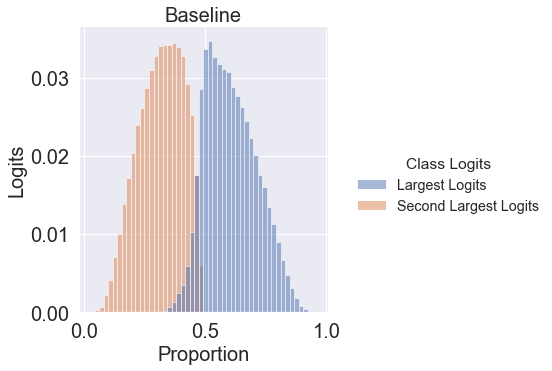} & \includegraphics[width=0.33\linewidth]{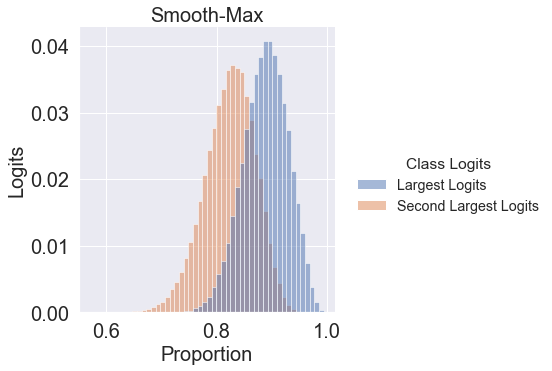} &
        \includegraphics[width=0.33\linewidth]{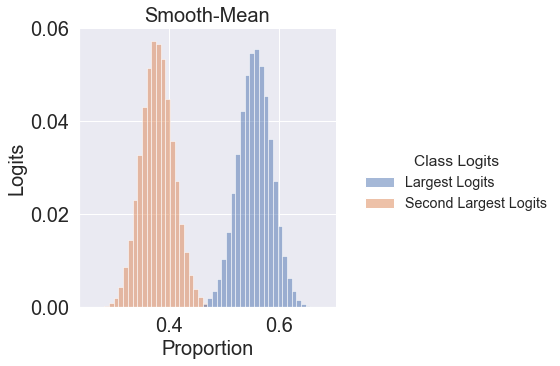}
    \end{tabular}
    \caption{\textbf{Logit Distributions for Smooth Classifiers for CIFAR-10.}\small \sl The histograms are arranged as follows: (L) SmoothAdv classifier,  (M) Smooth-Max classifier, and (R) Smooth-Mean classifier. The blue bars the logits for the most probable class, while the orange represent those for the second-most probable class. For good smooth classifiers, the blue peak should be at $1.0$ with low variance and the orange peak should be close to $0$. Observe that Smooth-Max is performs better than SmoothAdv on the first criterion, while Smooth-Mean performs better on both.}
    \label{appdxfig:logit_cifar10}
\end{figure}

For an exemplar certificate in the case of video classifiers, we immediately observe similar behavior. In \Figref{appdxfig:logit_vids}, we observe logit distributions for two examples from the UCF-101 dataset. It is clearly evident that while Smooth-Max and Smooth-Mean provide better margins and lower variance than standard randomized smoothing. However, the logit values are still skewed lower than those for images, and the variance across the logit values is fairly higher. This explains why our video certificates are far lower than image certificates. We conjecture that this is an effect of the difficulty in training noise-robust 3D CNN models for video. However, we leave exploring this phenomenon to future work. 

\begin{figure}[htp]
    \begin{tabular}{c c c} 
        \includegraphics[width=0.33\linewidth]{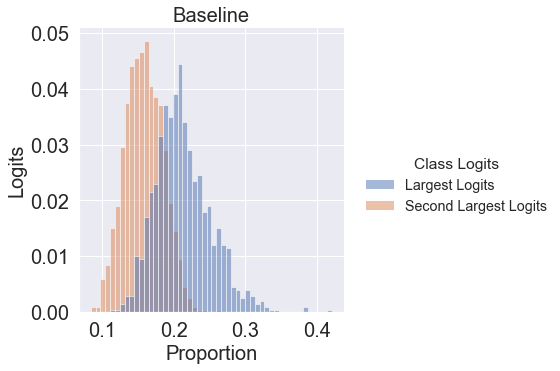} & \includegraphics[width=0.33\linewidth]{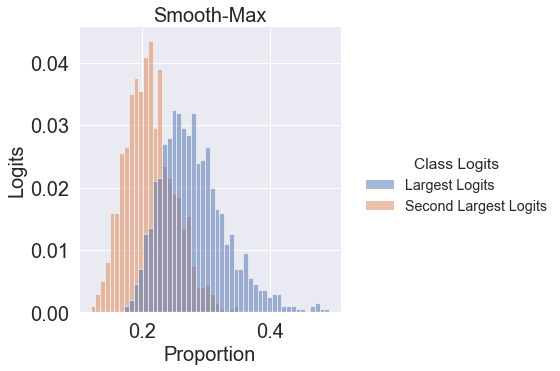} &
        \includegraphics[width=0.33\linewidth]{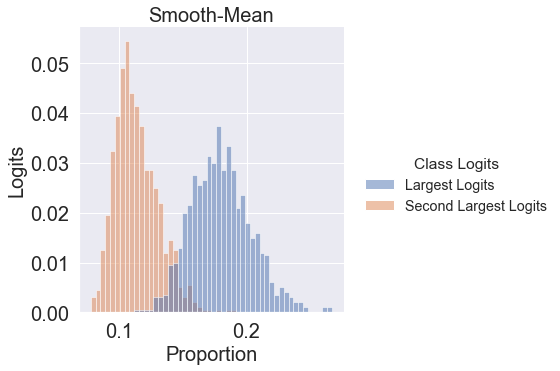} \\
        \includegraphics[width=0.33\linewidth]{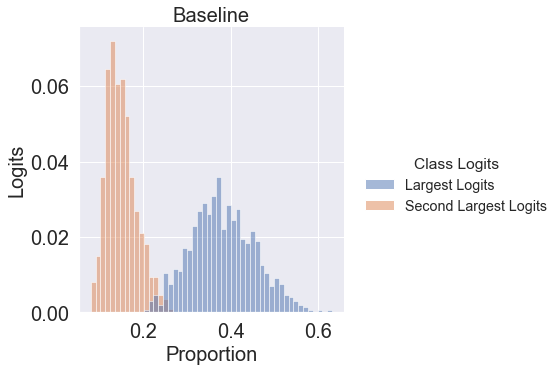} & \includegraphics[width=0.33\linewidth]{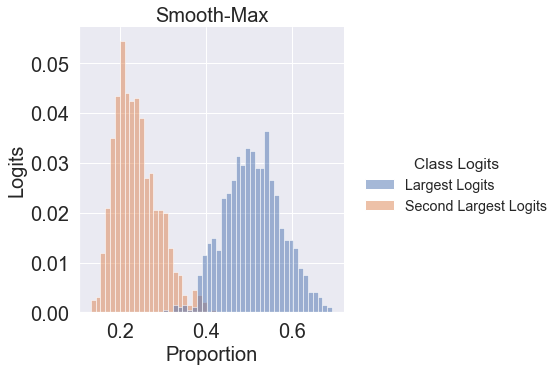} &
        \includegraphics[width=0.33\linewidth]{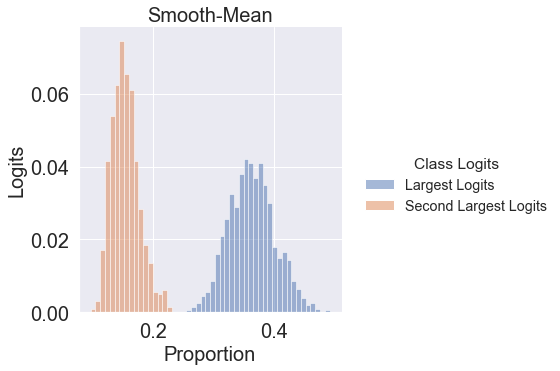}
    \end{tabular}
    \caption{\textbf{Logit Distributions for Smooth Classifiers for UCF-101.} \small \sl (L) shows logit distributions for standard randomized smoothing, (M) and (R) show the same for Smooth-Max and Smooth-Mean respectively. Observe that in comparison with image classifiers, RS and Smooth-Reduce with video classifiers leads to lower $\underline{p_A}$ estimates as well as high variance. This results in lower certified radii and higher abstention rates as seen in \Figref{fig:vidresults}. However, Smooth-Mean still outperforms SmoothAdv.}
    \label{appdxfig:logit_vids}
\end{figure}

\subsection{Some Additional Discussion on Confidence Intervals for Ensembling}
We also reproduce some results by \cite{Horvath2021BoostingRS} which support increasing success rates for Smooth-Mean. 

Horvath~\etal~\cite{Horvath2021BoostingRS} analyse the following soft ensemble classifier, 
\begin{align*}
        \bar{f}(\rvx) = \frac{1}{k}\sum_{i=1}^k f_i(\rvx) .
\end{align*}
Let $\rvy_i$ be the logits from each of the sub-classifiers, $f_i$, and $\rvy$ be the same for the ensembled classifier. They further model $\rvy_i=\rvy_{i, c} + \rvy_{i, p}$,  where $\rvy_{i,c}$ is a random variable representing the contribution of the $i^{th}$ sub-classifier and $\rvy_{i,p}$ represents the contribution due to random noise added during randomized smoothing. Further they assume, $\E [\rvy_{i,c}] = \rvc$ and $\E[\rvy_{i,p}] = 0$. The variance of $\rvy_p$ is assumed to be $\Sigma_p$ where $\Sigma_{ii} = \sigma_i^2$, and $\Sigma_{ij} = \sigma_i\sigma_j\rho{ij}$. This holds as the two processes of training and smoothing are independent. Notice that Smooth-Mean classifiers are a special class of such classifiers, where the sub-classifiers are constructed with independent sampling matrices. 

Now, they analyse the class margins, $t_i = y_1 - y_1$ where $y_i$ are elements of $\rvy$ and $1$ is the majority class (WLOG). Notice, 
\begin{align*}
    \E [\rvz_i] = c_1 - c_i \\
    \text{Var}[\rvt_i] =  \sigma_{p, 1}^2 + \sigma_{p, i}^2 +\sigma_{c,1}^2 + \sigma_{c,i}^2 -  2\sigma_{p,1}\sigma_{p,i}\rho{p1,i} - 2\rho{c,1i}\sigma+{c,1}\sigma{c,i}
\end{align*}

Through careful arithmetic, they show that, 
\[
\text{Var}(\bar{\rvt})  = \sigma_p^2(k) + \sigma_{c}^2(k), 
\]
where $\sigma_m^2 = \frac{k + {k \choose 2}\zeta_m}{k^2}(\sigma_{p,1}^2 + \sigma_{p,i}^2 -2\rho_{p,1i}\sigma{p,1}\sigma_{p,i}$ for $m\in[p,c]$, and $\zeta_m \in  [0,1]$ refers to parameter denoting covariance between $y_{i,m}$ and  $y_{j,m}$; refer ~\cite{Horvath2021BoostingRS} for more details.

This decoupling of the variance between the perturbations due to RS and ensembling proves to be important in understanding the benefits of ensembling. They present the following result on success probabilities, 

\noindent\textbf{Informal Theorem}[From \cite{Horvath2021BoostingRS}]
    \textit{For a soft-ensemble of $k$ classifiers which provides a certificate with radius $R$ with probability $1 - \alpha_1$, the upper bound of the probability of failure decreases with $O(k^2)$}

To measure the effect on success probability, we consider the probability of a majority of the sub-classifiers predicting class $1$, $\beta_1$.
\[
    \beta_1 = \sP(\bar{f}(\rvx+\rvz) = 1) \\
    = \sP(\bar{\rvt} > 0 :\forall i\in[2, C]) \\
    = \int_{\bar{\rvz} > 0 : \forall i \in [2, C]}\sP(\bar{\rvt}).d\bar{\rvz} 
\]
While this integral cannot be evaluated directly as we do not know the density function for $\rvz$, we can lower bound $\beta_1$ using Chebyshev's inequality and the union bound over the incorrect $[2, C]$ classes.
\[
    \beta_1 \geq  1 - \sum_{i=1}^C \frac{(\sigma_{i,c}(k)^2 + \sigma_{i, p}(k)^2}{(c_1 - c_i)^2}
 \].
 As $\sigma_{i,c}$ and $\sigma_{i,k}$ decrease quadratically with increasing $k$, we can prove the above theorem.

\end{document}